\documentclass{article}

\usepackage{arxiv}

\usepackage{amsmath,amsfonts,bm}

\def\eqref#1{equation~\ref{#1}}
\def\1{\bm{1}}

\def\vzero{{\bm{0}}}

\def\vb{{\bm{b}}}
\def\vc{{\bm{c}}}

\def\ve{{\bm{e}}}

\def\vh{{\bm{h}}}

\def\vk{{\bm{k}}}

\def\vv{{\bm{v}}}

\def\vx{{\bm{x}}}

\def\vz{{\bm{z}}}

\def\mA{{\bm{A}}}

\def\mI{{\bm{I}}}

\def\mV{{\bm{V}}}
\def\mW{{\bm{W}}}
\def\mX{{\bm{X}}}

\DeclareMathAlphabet{\mathsfit}{\encodingdefault}{\sfdefault}{m}{sl}
\SetMathAlphabet{\mathsfit}{bold}{\encodingdefault}{\sfdefault}{bx}{n}

\def\gJ{{\mathcal{J}}}

\def\gM{{\mathcal{M}}}

\def\gV{{\mathcal{V}}}

\def\sC{{\mathbb{C}}}
\def\sD{{\mathbb{D}}}
\def\sM{{\mathbb{M}}}
\def\sN{{\mathbb{N}}}

\newcommand{\R}{\mathbb{R}}

\DeclareMathOperator*{\argmax}{arg\,max}

\usepackage[utf8]{inputenc} % allow utf-8 input
\usepackage[T1]{fontenc}    % use 8-bit T1 fonts
\usepackage{hyperref}       % hyperlinks
\usepackage{url}            % simple URL typesetting
\usepackage{booktabs}       % professional-quality tables
\usepackage{amsfonts}       % blackboard math symbols
\usepackage{nicefrac}       % compact symbols for 1/2, etc.
\usepackage{microtype}      % microtypography
\usepackage{cleveref}       % smart cross-referencing
\usepackage{lipsum}         % Can be removed after putting your text content
\usepackage{graphicx}
\usepackage{natbib}
\usepackage{doi}
\usepackage[dvipsnames]{xcolor}
\usepackage{xfrac}
\usepackage{amsthm}
\newtheorem{theorem}{Theorem}[section]
\theoremstyle{definition}
\newtheorem{definition}{Definition}[section]
\newtheorem{proposition}{Proposition}[section]
\newtheorem{lemma}{Lemma}[section]

\usepackage{graphicx}
\usepackage{multirow}
\usepackage{soul}
\usepackage{changepage,threeparttable}
\usepackage[caption = false]{subfig}
\usepackage{wrapfig,lipsum}
\usepackage{makecell}
\usepackage[ruled,linesnumbered,norelsize]{algorithm2e}

\SetKwComment{Comment}{$\triangleleft$\ }{}

\SetCommentSty{mycommfont}
\usepackage{pifont}% http://ctan.org/pkg/pifont
\usepackage{textcomp}
\usepackage{enumitem}
\usepackage{tcolorbox}

\title{Laguerre Geometry for Interpreting Large Language Models}

\newif\ifuniqueAffiliation
\uniqueAffiliationfalse % note: switch between \uniqueAffiliationtrue and \uniqueAffiliationfalse

\ifuniqueAffiliation % Standard variant of author block
\author{{Chunwei Ma} \\
	JMP Statistical Discovery\\
	Cary, NC, USA\\
	\texttt{chunwei.ma@jmp.com} \\
	\And
	{Russell D.~Wolfinger} \\
	JMP Statistical Discovery\\
	Cary, NC, USA\\
	\texttt{russ.wolfinger@jmp.com} \\	
}
\else
\usepackage{authblk}

\newbox{\orcid}\sbox{\orcid}{\includegraphics[scale=0.06]{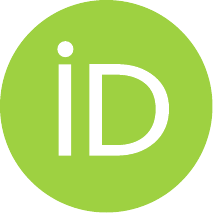}} 
\author[1]{%
	{Chunwei Ma\thanks{chunwei.ma@jmp.com}}%
}
\author[1]{%
	{Russell D.~Wolfinger\thanks{russ.wolfinger@jmp.com}}%
}
\affil[1]{JMP Statistical Discovery, Cary, NC, USA}
\fi

\renewcommand{\headeright}{}
\renewcommand{\shorttitle}{Laguerre Geometry for Interpreting Large Language Models}

\hypersetup{
pdftitle={Laguerre Geometry for Interpreting Large Language Models},
pdfsubject={cs.AI, cs.LG},
pdfauthor={Chunwei Ma, Russell D.~Wolfinger},
pdfkeywords={Laguerre Geometry, Large Language Models, Interpretability},
}

\begin{document}
\maketitle

\renewcommand{\thefootnote}{$\sharp$} % makes footnote markers print as sharp
\begin{abstract}
Existing hypotheses represent a concept in an LLM as a single point, a linear direction, or a Gaussian cluster, yet it remains unclear how and why such structures emerge. Here, we show that concept geometry can be precisely characterized via Laguerre Geometry, in which a concept is defined as a region—a Laguerre-Voronoi cell or a union of cells—allowing us to strictly define, measure, and separate concepts. Building on this formulation, we show that finer-grained concept structures, such as inclusion and hierarchy, are naturally revealed by the Laguerre weights.
We then push this geometry inside the transformer. Decomposing each layer into piecewise-linear operators, we show that a token's hidden trajectory is governed by two coupled mechanisms: a \emph{static tree} of self-contained piecewise-linear flow, and a \emph{dynamic transport} that hops the trajectory across trees when cross-token attention fires. This decomposition yields \texttt{Geometric Lens}, a training-free, hyperparameter-free method for reading out the exact concept a hidden vector encodes at any layer. We also develop \texttt{Laguerre Autoencoder}, a 2D visualizer that renders both the decision geometry and a model's full reasoning trajectory in one view.
Finally, we move beyond explanatory geometry toward actionable interpretability, showing that Geometric Lens recovers the correct factual token when a model is prompted with in-context interference.
The code is available on GitHub\footnotemark{}.
\end{abstract}
\footnotetext{\url{https://github.com/horsepurve/Geometric-Lens}}

\section{Introduction} \label{sec:introduction}
Large Language Models (LLMs) have achieved remarkable breakthroughs not only in general question-answering and conversation, but also in coding~\citep{li2022competition, chen2021evaluating}, mathematical reasoning~\citep{romera2024mathematical}, multimodal (image, audio, and video) generation~\citep{wu2023next}, and scientific discovery~\citep{ghareeb2026multi, aygun2026ai}. Yet, although every atomic internal computation within an LLM is ontologically well understood, the reason these computations, when composed, give rise to a high degree of intelligence remains largely epistemologically opaque—warranting deeper scientific inquiry.

A typical LLM is built on decoder-only transformers and trained autoregressively, with the model optimized to correctly predict the next token given a preceding piece of text. How this simple training objective gives rise to emergent capabilities such as comprehension and reasoning remains one of the central mysteries of LLMs. In moving from next-token probability distributions to emergent capabilities, the first natural question is how concepts are organized within an LLM's internal representations.

To answer this question, an intensively studied idea is the linear representation hypothesis (LRH)~\citep{mikolov2013distributed, elhage2022toy, nanda2023emergent, gurneelanguage, park2023linear}—the informal hypothesis that semantic concepts are represented linearly in the representation spaces of LLMs.
LRH has been widely substantiated across various contexts, including linear probes~\citep{hewitt2019structural, tenney2019you}, vector arithmetic~\citep{mikolov2013efficient, park2023linear, zou2023representation}, intervention and activation steering~\citep{li2023inference, turner2023steering}, and circuit tracing~\citep{gurnee2026models, templeton2026scaling, sofroniew2026emotion}.
However, four questions remain unresolved under this hypothesis: 

(I) What is the definition of a concept? (II) How is a concept represented, and in which representation space? (III) What concrete geometric structure carries it? (IV) Why should this structure be linear, and to what extent does linearity hold?

In this paper, we categorize LRH into two branches according to the representation space in which it is applied:
(1) \textit{Token classification layer}—the final linear projection layer of an LLM, which serves as the classification head and is trained with a cross-entropy loss, as in a standard deep neural network. A concept here is typically defined as a contrastive direction~\citep{park2023linear}, a whitened unembedding vector~\citep{park2025geometry}, or the final hidden vector of a token~\citep{xiong2026lattice}. Most existing geometric analyses of LLMs focus on this layer.
(2) \textit{Intermediary layers} following each transformer block. LRH is the hypothesis underlying sparse autoencoders (SAEs)~\citep{kantamneni2025sparse, bhalla2025temporal, hindupur2026projecting, bhalla2026sparse}, where high-level concepts are assumed to be linearly separable by latent units. A concept in this setting is defined as a latent direction that activates in response to a specific semantic meaning.

Some recent works challenge this vector/direction-based definition of concept in favor of clusters or regions. For instance, \citet{zhao2025beyond, shafran2026directions} model concepts as mixtures of Gaussian distributions, offering an alternative to SAEs. However, in these approaches a "region" remains an empirically defined cluster of activations, lacking an exact geometric structure.

From a reductionist standpoint, understanding the high-level phenomena of a system requires first understanding its most elemental constituent pieces. Prior work has shown that a feedforward neural network (FNN) with piecewise-linear activations partitions its input space into numerous piecewise linear regions~\citep{montufar2014number}; since FNNs account for the majority of an LLM's parameters, this same decomposition exists within LLMs as well. However, it remains unclear how this underlying geometry connects to the high-level semantic meanings encoded by an LLM.

\begin{figure}
    \centering
    \subfloat{\includegraphics[height=5.3cm]{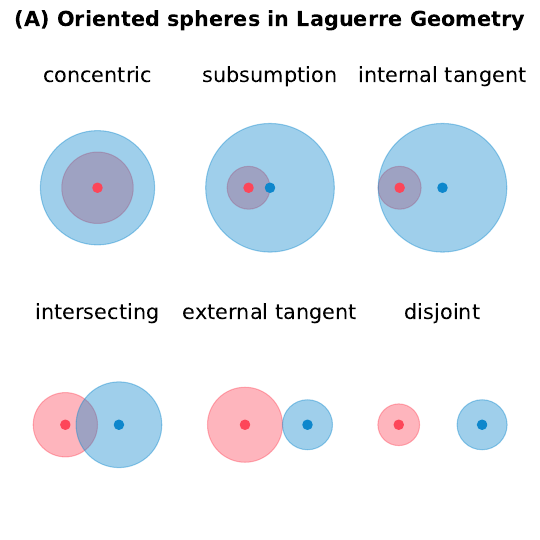}} 
    \subfloat{\includegraphics[height=5.3cm]{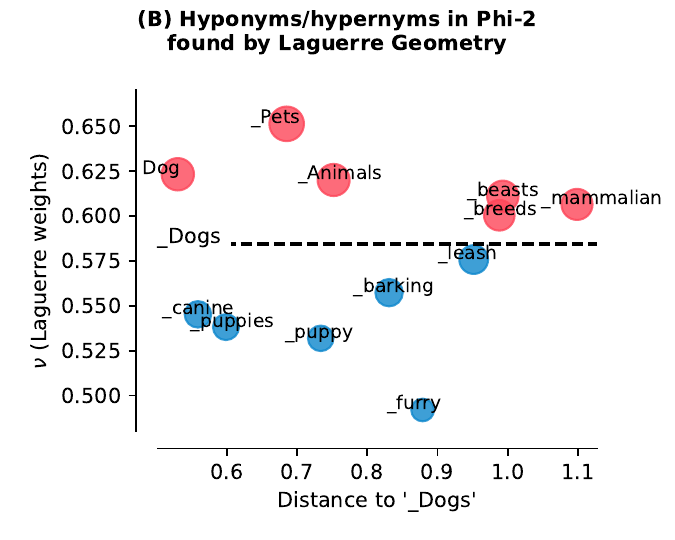}}
    \subfloat{\includegraphics[height=5.3cm]{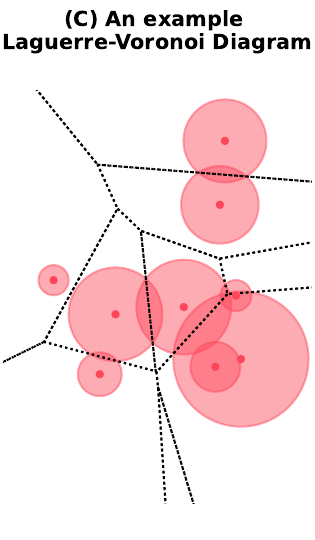}} 
    \caption{(A) A demonstration of all possible conditions of two oriented hyperspheres in Laguerre Geometry. (B) The hyponyms and hypernyms of the token \texttt{\_Dogs} discovered in the Phi-2 model. Without Laguerre weights, distance to the unembedding vector alone (x-axis) cannot distinguish hyponyms from hypernyms. See~\ref{fig_supp:inclusion} for the full figure. (C) An example Laguerre-Voronoi Diagram in $\R^2$. In this work, a concept under LRH is defined as an entire cell, rather than a single point, vector, or direction.}\label{fig:laguerre}
\end{figure}

In this paper, we propose a geometric framework that unifies low-level geometric decomposition with high-level semantic organization, connected via two complementary passes: top-down, we identify the semanticity-carrying regions in the top unembedding layer and progressively subdivide the space in lower layers; bottom-up, we track the region a prompt visits as it propagates from the input embedding layer to upper layers, assigning a meaning to each intermediate vector. In doing so, we move beyond hypothesis to concrete geometric objects that naturally exist within any transformer-based LLM. This instantiation of LRH allows us to precisely define both concepts and linearity in the final and intermediary representation spaces.

%%
% What is Laguerre? Why Laguerre?
Most existing geometric frameworks for LLM interpretability represent a concept via its unembedding vector—that is, as a single high-dimensional point. Laguerre Geometry extends this point to a high-dimensional hypersphere, admitting a much richer set of oriented contact conditions between two or more hyperspheres (e.g., disjoint, tangent, intersecting, contained, and concentric)~\citep{imai1985voronoi, aurenhammer1987power, bobenko2021non}. In this paper, we ask: can extending concept points (vectors, directions) to concept hyperspheres reveal richer semantic knowledge embedded in a trained LLM? To answer this, we make the following contributions:
\begin{itemize}[nosep, topsep=0pt,leftmargin=*]
    \item We show that the embedding layer of an LLM induces a Laguerre-Voronoi Diagram (LVD) in the representation space. This diagram lets us strictly define concepts, linearity, and the relationships between them as concrete geometric objects. Laguerre Geometry also provides a powerful tool for modeling conceptual hierarchy.
    \item Moving from the "outer" embedding layer to the "inner" transformer blocks, we develop \texttt{Geometric Lens}, which uncovers the Laguerre region in which each hidden vector resides. We further develop \texttt{Laguerre Autoencoder}, a 2D visualization method that jointly renders the LLM's hidden representations alongside their corresponding LVD regions.
    \item We construct datasets, evaluation metrics, and experiments to assess linearity and hierarchy under different hypotheses, and to evaluate hidden-state elucidation across different lenses. We further show that our geometric framework effectively mitigates in-context interference.
\end{itemize}
\section{Related Works} \label{sec:related}

\subsection{Geometric and Mechanistic Interpretability of LLMs}
One of the most crucial open problems in LLM research is a scientific theory of how high-order intelligence emerges from training, and interpretability is a critical step toward this goal.
Mechanistic Interpretability (MI)~\citep{sharkey2025open} seeks to reverse-engineer the actual computation learned by a network, down to the level of individual elements and how they compose.
LRH serves as a working assumption underlying many MI methods. For example, sparse autoencoders (SAEs) are built to recover an overcomplete basis of linearly-encoded, monosemantic "features" (i.e., concepts)~\citep{li2025geometry, muchane2025incorporating, engels2025not, kantamneni2025sparse, bhalla2026sparse}. LRH also underlies contrastive activation steering~\citep{panickssery2023steering, bigelow2025belief}, which assumes a single direction vector points directly from a concept (e.g., "deception") to its contrastive counterpart (e.g., "honesty").
Beyond simple linear directions, the linearity assumed by LRH has also been extended to more complex structures, such as simplices~\citep{park2025geometry} or lattices~\citep{xiong2026lattice}, argued to exist within trained LLMs.
Two issues remain, however. (1) Some of these geometric structures have not been benchmarked against a random null model: for instance, \citet{golechha2025intricacies} show that the orthogonality and polytope structures reported in \citet{park2025geometry} arise trivially in high-dimensional spaces, even for random, semantically meaningless concept sets. (2) Most geometric perspectives remain explanatory, characterizing a specific geometric structure without offering a route to actionable intervention for MI.

\subsection{Computational Geometry for Deep Neural Networks}
Our work is motivated by prior findings that deep feedforward neural networks (DNNs) with piecewise-linear activation functions partition their input space into piecewise-linear response regions, the number of which grows exponentially faster than in shallow counterparts~\citep{pascanu2013number, montufar2014number, arora2016understanding, serra2018bounding, hanin2019complexity, xiong2020number, xiong2024number, gaines2026characterizing}. The number of maximal linear regions serves as a classical metric for both spatial partitioning and expressivity.
\citet{power2019} showed that the exact response structure induced by a single neuron with piecewise-affine activations (e.g., (leaky-)ReLU, MaxOut, Abs) is a Power Diagram~\citep{aurenhammer1987power}—an equivalent formulation of the LVD—with subsequent applications in RNNs~\citep{wang2018max} and GANs~\citep{balestriero2020max}.
\citet{ma2022few} applied cluster-induced Voronoi diagrams to the decision layer of a DNN, showing that a well-trained DNN with frozen parameters retains sufficient capacity to accommodate new, unseen classes~\citep{ma2023progressive}.
In our work, we show that this same Voronoi structure exists throughout LLMs as well, and we establish it as a foundation for concept organization.

\subsection{Tropical Geometry for Deep Neural Networks}
Tropical Geometry offers an analytical and algebraic framework for quantifying linear regions in DNNs via tropical algebra (the max-plus semiring)~\citep{zhang2018tropical, maragos2021tropical}. More recently, several works have bridged tropical algebra with attention mechanisms~\citep{alpay2026geometry, hashemi2026tropical, su2026sparsity}.
Notably, \citet{su2026expressivity} modeled a single attention head by fixing the key vectors $\{\vk_j\}$ and treating the query space as the free domain, such that the fixed set of keys partitions the query space into a Power Diagram with associated weights $\|\vk_j\|^2$—thereby incorporating the attention mechanism into the Tropical framework.
However, quantifying the number of regions for expressivity does not by itself yield interpretability. Rather than counting regions, we draw on linear subdivision theory to associate each region with a concept and track the trajectory of concepts across regions, in service of actionable interpretability. Our direction is therefore complementary to the Tropical Geometry viewpoint on LLMs.

\def\modelDim{d}

\section{Preliminary} \label{sec:preliminary}
We begin with the necessary background on LLMs and the LVD.

\textbf{Large Language Models.} \label{sec:llm}
We focus on decoder-only, GPT-like transformer-based LLMs~\cite{vaswani2017attention, radford2018improving}, and consider a frozen pretrained language model $\gM$ with $L$ layers and vocabulary $\gV=\{\ve_1,...,\ve_{|\gV|}\}$.
Given an input sequence of $T$ tokens $S = \langle s_1, ..., s_{T} \rangle$, we represent the input as a matrix $\mX \in \mathbb{R}^{T \times d}$, whose $i$-th row is the embedding vector of the $i$-th token in the sequence, and where $d$ denotes the hidden dimension of $\gM$'s representations.
At layer $\ell \in [1, \ldots, L]$, we denote by $\vh_{i}^{(\ell)} \in \R^{d}$ the hidden representation (also called the residual stream) at position $i \in [1, \ldots, T]$, obtained by running $\gM$ on $S$.
Each transformer layer consists of a multi-head attention block (MHA) followed by a feed-forward network block (MLP). The MHA first reads the residual stream and produces an update $\Delta \text{attn}_{i}^{(\ell)}$, which is added back to it; the resulting vector is then passed through the MLP to produce a second update $\Delta \text{mlp}_{i}^{(\ell)}$:
\begin{equation}\label{eq:llm_attn}
    \Delta \text{attn}_{i}^{(\ell)} = \text{MHA}^{(\ell)}(\vh_{i}^{(\ell)}),
\end{equation}
\begin{equation}\label{eq:llm_mlp}
    \Delta \text{mlp}_{i}^{(\ell)} = \text{MLP}^{(\ell)}(\vh_{i}^{(\ell)} + \Delta \text{attn}_{i}^{(\ell)}).
\end{equation}
Both updates are then added back to the residual stream:
\begin{equation}\label{eq:llm_h_new}
    \vh_{i}^{(\ell+1)} = \vh_{i}^{(\ell)} + \Delta \text{attn}_{i}^{(\ell)} + \Delta \text{mlp}_{i}^{(\ell)}.
\end{equation}
We use $\Delta$ to emphasize that each block's output is an incremental refinement added to the residual stream, rather than a replacement of it. After the $L$-th transformer layer, a linear projection layer (also called the unembedding layer), parameterized by $\mW \in \R^{d \times |\gV|}$ and $\vb \in \R^{|\gV|}$, is applied to the final residual stream to produce the model's output:
\begin{equation}\label{eq:llm_lm_head}
    \gM(\mX_{\{1,...,i\}}) = \text{softmax}(F_{un}(\vh_{i}^{(L)})), \quad F_{un}(\vh_{i}^{(L)}) = \mW \vh_{i}^{(L)} + \vb.
\end{equation}

\textbf{Laguerre-Voronoi Diagram.} \label{sec:lvd}
As an extension of the Voronoi Diagram (VD), the Laguerre-Voronoi Diagram (LVD) represents each site as a center-weight pair, and a collection of $V$ sites partitions the space $\R^d$ into $V$ disjoint cells. We first give the formal definition of an LVD, and later show how an LLM naturally induces an LVD in its representation space.

\begin{definition}[Laguerre-Voronoi Diagram~\cite{imai1985voronoi, aurenhammer1987power}] \label{def:vd}
    Let $\Omega = \{\omega_1,...,\omega_V\}$ be a partition of $\R^d$ into a set of centers $\sC = \{\vc_1,...,\vc_V\}$, such that $\cup_{r=1}^V \omega_r = \R^d$ and $\omega_r \cap \omega_{r'} = \emptyset$ for all $r \neq r'$. Each center $\vc_r$ is further associated with a scalar weight $\nu_r \in \R$. The set of triples $\{ (\omega_1, \vc_1, \nu_1),...,(\omega_V, \vc_V, \nu_V) \}$ then defines a Laguerre-Voronoi Diagram (LVD), also known as a Power Diagram, where each cell is given by
    \begin{equation}\label{eq:vd}
        \omega_r = \{ \vz \in \R^d : \Re(\vz) = r\}, \quad r \in \{1,...,V\},
    \end{equation}
    and $\Re: \R^d \to \{1,...,V\}$ assigns each point to its owning cell via
    \begin{equation}\label{eq:vd_assign}
        \Re(\vz) = \operatorname*{arg\,min}_{j \in \{1,...,V\}} \; d(\vz, \vc_j)^2 - \nu_j.
    \end{equation}
    Here $d(\cdot, \cdot)$ denotes a distance function; throughout this paper, we take $d$ to be the Euclidean distance. The quantity minimized in Eq.~\eqref{eq:vd_assign} is called the Laguerre distance. When all weights are equal, i.e., $\nu_j = \nu_{j'}$ for all $j, j' \in \{1,...,V\}$, the LVD collapses to an ordinary VD.
\end{definition}

\section{Methodology} \label{sec:method}
As discussed above, while the geometric structure of (ReLU-based) deep neural networks is well understood~\citep{pascanu2013number, montufar2014number, arora2016understanding, power2019, montufar2022sharp}, transformers are considerably harder to characterize geometrically due to their parallel architecture. Isolating the computation graph at a single token position $i$ reduces the LLM to a multiple-input ($\mX_{\{1,...,i\}}$), single-output (the $(i+1)$-th token) system. In what follows, we first restrict our analysis to a single input token, then extend it to multiple input tokens, and finally develop a tool to visualize the trajectories of all tokens in a 2D plane.

\subsection{Explaining the Geometry of a Single-Input LLM} \label{sec:single_input}
Existing geometric analyses of LLMs largely operate in the final representation space, after the $L$-th transformer block~\citep{jiang2024origins, park2025geometry, xiong2026lattice}. 
% For instance, \citet{park2025geometry} extract $\lambda(x)$, the final-layer output at the last token position of an input text $x$, and $\gamma(y)$, the unembedding vector of output token $y$, while leaving internal representations untouched. 
In this section, we first show that the unembedding layer itself carries rich geometric structure, revealed by its parameters $\mW, \vb$, and then propagate this structure back to the intermediary layers.

\subsubsection{Unembedding Layer Partitions the Representation Space as a Laguerre-Voronoi Diagram}
The geometry of linear projection layers has been studied extensively as the final decision layer of a deep neural network~\citep{ma2022few, ma2023progressive}, where its Power Diagram structure was first revealed by \citet{power2019}. We apply this structure to the unembedding layer of an LLM, which maps the final hidden representation to the next-token distribution. We argue that this structure carries substantially richer information than the unembedding vectors alone, revealing semantic knowledge that the vectors themselves cannot.

\begin{theorem}[Unembedding Layer is a Laguerre-Voronoi Diagram] \label{thm:lvd}
    The unembedding layer of an LLM, parameterized by $\mW, \vb$, partitions the representation space $\R^d$ into a Laguerre-Voronoi Diagram with centers $\{\vc_1,...,\vc_j,...,\vc_V\}$ and weights $\{ \nu_1,...,\nu_j,...,\nu_V \}$ given by:
	\begin{equation}\label{eq:pd_center}
		\vc_j = \tfrac{1}{2} \mW_j,
	\end{equation}
	\begin{equation}\label{eq:pd_weight}
		\nu_j = \vb_j + \tfrac{1}{4} \|\mW_j\|_2^2,
	\end{equation}
    where $V = |\gV|$, i.e., the number of cells equals the vocabulary size of the LLM.
\end{theorem}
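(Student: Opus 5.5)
The plan is to show that the argmax decision rule of the unembedding layer coincides pointwise with the argmin rule of Definition~\ref{def:vd} for the stated centers and weights. The cells of the partition induced by the unembedding layer are the sets on which a given token attains the largest logit. Because softmax is strictly monotone, the predicted token for a hidden vector $\vz \in \R^d$ is $\argmax_j (\mW_j^\top \vz + \vb_j)$, where $\mW_j$ denotes the $j$-th unembedding vector, i.e.\ the $j$-th column of $\mW$, regarded as a vector in $\R^d$. So it suffices to rewrite each logit $\mW_j^\top\vz + \vb_j$ as the negative of a Laguerre distance plus a term that does not depend on $j$.

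The key step is completing the square. With $\vc_j = \tfrac12 \mW_j$, expand
\begin{equation*}
\|\vz - \vc_j\|_2^2 - \nu_j = \|\vz\|_2^2 - \mW_j^\top \vz + \tfrac14\|\mW_j\|_2^2 - \nu_j .
\end{equation*}
Substituting $\nu_j = \vb_j + \tfrac14\|\mW_j\|_2^2$ cancels the quadratic term in $\mW_j$. What remains is $\|\vz\|_2^2 - (\mW_j^\top\vz + \vb_j)$.

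Since $\|\vz\|_2^2$ is common to all $j$, minimizing the Laguerre distance over $j$ is equivalent to maximizing the logit. Hence $\Re(\vz) = \argmax_j F_{un}(\vz)_j$, and the unembedding cells are exactly the Laguerre cells $\omega_j$. The number of cells is the number of sites, which is the number of columns of $\mW$, namely $V = |\gV|$.

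The only subtle point is that Definition~\ref{def:vd} requires the cells to be pairwise disjoint and to cover $\R^d$. Ties in the argmin occur only on the bounding hyperplanes $\{(\mW_j - \mW_{j'})^\top \vz + \vb_j - \vb_{j'} = 0\}$, which have measure zero. I would handle them by fixing a deterministic tie-breaking rule, say the smallest index, applied identically in both the argmax and the argmin, so the two assignments agree everywhere. I would also note that some cells may be empty, for instance when a token can never be the argmax; this is still consistent with the definition, so the main theorem needs no further argument.
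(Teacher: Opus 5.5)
Your proof is correct, and it takes a more elementary route than the paper's. The paper argues through Aurenhammer's lifting. It writes the power function $\|\vz-\vc\|_2^2-r^2$ as the vertical gap between the paraboloid $y=\|\vz\|_2^2$ and the hyperplane $\Pi(S): y=2\vz^\top\vc-\vc^\top\vc+r^2$. It then cites the bijection between spheres and non-vertical hyperplanes, together with Aurenhammer's lemma that power-cell bisectors are projections of $\Pi(S_1)\cap\Pi(S_2)$. Finally, it reads off $\vc_j=\tfrac12\mW_j$ and $\nu_j=\vb_j+\tfrac14\|\mW_j\|_2^2$ by matching $\Pi(S)$ against the logit hyperplane $\mW_j^\top\vz+\vb_j$.

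Your identity $\|\vz-\vc_j\|_2^2-\nu_j=\|\vz\|_2^2-(\mW_j^\top\vz+\vb_j)$ is exactly the algebra behind that coefficient matching. The difference is that you apply it pointwise to get $\Re(\vz)=\argmax_j F_{un}(\vz)_j$ directly, without the cited lemmas. This buys you three things:
\begin{itemize}
\item the argument is self-contained;
\item the orientation is explicit: the cells are where a logit is largest, i.e.\ the upper envelope of the lifted hyperplanes, whereas the paper's lemma says ``lower envelope'', which taken literally describes where a logit is smallest;
\item you handle ties and possibly empty cells, which the paper passes over.
\end{itemize}

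The paper's formulation, in turn, supplies the geometric reading the rest of the paper relies on. It treats $\nu_j$ as a squared hypersphere radius, which underlies the hierarchy and domination discussion, together with the caveat that $\nu_j$ may be negative. It also presents the diagram as the projection of a polyhedral envelope in a lifted space, which underlies the lifted-point representation used in the probing experiments.

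One small precision: your remark that ties lie on measure-zero hyperplanes requires the pairs $(\mW_j,\vb_j)$ to be pairwise distinct, since identical unembedding rows with equal biases tie everywhere. Your fixed tie-breaking rule, applied identically to the argmax and the argmin, already makes the argument independent of that remark.
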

Proof is in Appendix~\ref{sec:app_pd}.
It is worth noting that, although prevailing LLMs typically set $\vb \equiv 0$ to reduce parameter count, 
% and some further tie the embedding and unembedding matrices, 
the LVD structure (as opposed to a plain VD) persists as long as the weights $\nu_j$ are not identical across $j$. This condition holds generically in practice, since $\nu_j$ depends on $\|\mW_j\|_2^2$, which varies across tokens even when $\vb \equiv 0$.

\textbf{Concept Geometry of LLMs.}
Prior definitions of concepts rely on binary attributes involving a pair of objects~\citep{park2025geometry} (e.g., \{"is mammal", "not mammal"\} or "mammal$\Rightarrow$bird"). Inspired by Laguerre Geometry, we instead define a concept as a single Laguerre-Voronoi cell (e.g., $\omega_\text{"dog"}$ or $\omega_\text{"cat"}$), and a category as the union of multiple cells (e.g., $\text{"pet"} = \{\omega_\text{"dog"}, \omega_\text{"cat"}, \omega_\text{"bird"},...\}$). A cell may be reused across different categories (e.g., "cat" belongs to both "pet" and "mammal"), and a category such as "pet" is itself a higher-order concept.

\begin{definition}[Concept] \label{def:concept}
    A concept $C_j$ is defined as a Laguerre-Voronoi cell, characterized by the tuple $(\omega_j, \vc_j, \nu_j)$:
    \begin{equation}
        C_j = \{\vz \in \R^d : \|\vz - \vc_j\|_2^2 - \nu_j < \|\vz - \vc_{j'}\|_2^2 - \nu_{j'}, \; \forall j' \neq j\}.
    \end{equation}
\end{definition}
This definition bridges an abstract semantic concept with a concrete geometric object (a convex polyhedron); accordingly, we use "concept" and "cell" interchangeably throughout the remainder of the paper. A category, in turn, is defined as the union of a finite number of such cells:
\begin{definition}[Category] \label{def:category}
    A category $U_{\sM}$, containing a set of $M$ concepts $\sM = \{C_1,...,C_M\}$, is the union of their respective cells: $U_{\sM} = \bigcup_{m=1}^M \omega_m$.
\end{definition}

We are now in a position to answer the open questions on LRH raised in the Introduction: (I) a concept is defined as a cell, or a union of cells; (II) these cells reside within the LVD induced by the unembedding layer;
% , giving a concrete geometric structure and representation space
and (III) linearity refers to the linear separability of two concepts (or categories), as formalized below.
\begin{theorem}[Linear Separability of Concepts] \label{thm:linear1}
    By the Hyperplane Separation Theorem, any two concepts $C_j, C_{j'}$, for all $j, j' \in \{1,...,V\}$, are linearly separable in $\R^d$.
\end{theorem}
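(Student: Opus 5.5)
The plan is to exhibit the separating hyperplane explicitly rather than only appeal to the abstract theorem. For distinct $j \neq j'$, the defining inequality of $C_j$ in Definition~\ref{def:concept}, taken with the single competitor $j'$, reads $\|\vz - \vc_j\|_2^2 - \nu_j < \|\vz - \vc_{j'}\|_2^2 - \nu_{j'}$. The quadratic term $\|\vz\|_2^2$ cancels on both sides, leaving the affine inequality
\begin{equation*}
    g_{jj'}(\vz) := 2(\vc_j - \vc_{j'})^\top \vz - \|\vc_j\|_2^2 + \|\vc_{j'}\|_2^2 + \nu_j - \nu_{j'} > 0 .
\end{equation*}
By Theorem~\ref{thm:lvd}, with $\vc_j = \tfrac12 \mW_j$ and $\nu_j = \vb_j + \tfrac14\|\mW_j\|_2^2$, this reduces to $(\mW_j - \mW_{j'})^\top \vz + \vb_j - \vb_{j'} > 0$, i.e.\ the logit of $j$ exceeds that of $j'$. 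Hence $C_j \subseteq \{g_{jj'} > 0\}$. By the same computation with the roles swapped, $C_{j'} \subseteq \{g_{jj'} < 0\}$. The hyperplane $\{g_{jj'} = 0\}$ therefore strictly separates the two concepts.

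The next step is to ensure this hyperplane is non-degenerate, meaning $\mW_j \neq \mW_{j'}$. If $\mW_j = \mW_{j'}$, then $g_{jj'}$ is constant. Either that constant is nonzero, in which case one of the two cells is empty and separation is trivial, or it is zero, in which case both cells are empty because the strict inequalities can never hold. In each case separability holds vacuously. I would state this case split briefly.

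Finally, I will connect the argument to the Hyperplane Separation Theorem as the statement phrases it. Each $C_j$ is a finite intersection of open half-spaces, so it is convex and open. Two disjoint convex sets with nonempty interior admit a separating hyperplane, and the explicit $g_{jj'}$ above witnesses this directly.

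I do not expect a real obstacle; the argument is elementary. The only point needing care is the degenerate or empty-cell case, together with the choice between strict and weak separation. The sets are open, so the affine bisector gives strict separation of the points in each cell, although the two closures may touch on the shared facet.
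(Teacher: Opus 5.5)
Your argument is correct, but it takes a different route from the paper. The paper gives no separate proof of this theorem. Its only justification is the appeal written into the statement: each cell is a finite intersection of half-spaces, hence convex, the cells are pairwise disjoint, and the Hyperplane Separation Theorem then supplies a separating hyperplane. You instead write the hyperplane down explicitly as the power bisector $\{\vz : (\mW_j-\mW_{j'})^\top\vz+\vb_j-\vb_{j'}=0\}$, i.e.\ the zero set of the logit difference between tokens $j$ and $j'$.

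The paper's appeal is shorter and applies verbatim to any pair of disjoint convex sets, for instance convex unions of cells. However, it is non-constructive and says nothing about empty cells. It also leaves open which kind of separation is obtained: the abstract theorem gives only weak separation, and strictness must come from the cells being open.

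Your construction needs neither convexity nor the abstract theorem, and it handles the concentric or empty-cell case. It also shows that the separator depends only on the pair $(j,j')$ and coincides with the model's own decision boundary between the two tokens. Finally, it makes explicit that the separation is strict pointwise but has zero margin whenever the closures of the two cells meet (e.g.\ along a shared facet).

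That last caveat is worth keeping. Theorem~\ref{thm:linear2} formulates separability with a margin $\epsilon>0$, and under that stronger notion two adjacent cells are \emph{not} separable. Your remark therefore pins down the precise sense in which the present theorem actually holds.
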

\begin{theorem}[Linear Separability of Categories] \label{thm:linear2}
    Any two categories $U_{\sM} = \bigcup_{m=1}^M \omega_m$ and $U_{\sN} = \bigcup_{n=1}^N \omega_n$ are linearly separable if and only if there exist a vector $\vv \in \R^d$, a scalar $\alpha \in \R$, a margin $\epsilon \in \R_{>0}$, and sets of non-negative multipliers $\lambda_{m}, \mu_{n} \in \R_{\ge0}^{V-1}$, for all $m \in \{1,...,M\}$ and $n \in \{1,...,N\}$, such that:\\
$A^{(m)T}\lambda_m=\vv, \; \beta^{(m)T}\lambda_m\le \alpha-\epsilon$, where
$A_j^{(m)}=2(\vc_j-\vc_m)^T, \; \beta_j^{(m)}=\|\vc_j\|^2-\nu_j-\|\vc_m\|^2+\nu_m, \; \forall j \neq m$, and\\
$A^{(n)T}\mu_n=-\vv, \; \beta^{(n)T}\mu_n\le -\alpha-\epsilon$, where
$A_j^{(n)}=2(\vc_j-\vc_n)^T, \; \beta_j^{(n)}=\|\vc_j\|^2-\nu_j-\|\vc_n\|^2+\nu_n, \; \forall j \neq n$.
\end{theorem}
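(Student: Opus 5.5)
Each cell $\omega_m$ is a polyhedron, cut out by the finitely many half-space constraints coming from comparing Laguerre distances with every other site. Linear separability of two finite unions of polyhedra therefore reduces to a family of containment conditions. Each cell $\omega_m \subseteq U_{\sM}$ must lie in one half-space $\{\vz : \vv^T\vz \le \alpha - \epsilon\}$, and each $\omega_n \subseteq U_{\sN}$ must lie in the opposite half-space $\{\vz : \vv^T\vz \ge \alpha + \epsilon\}$. Each such containment of a polyhedron in a half-space is then certified by Farkas' lemma, which is exactly the source of the multipliers $\lambda_m,\mu_n$ in the statement.

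\textbf{Step 1: write each cell as a polyhedron $\{\vz : A^{(m)}\vz \le \beta^{(m)}\}$.} Expand $\|\vz-\vc_m\|^2-\nu_m \le \|\vz-\vc_j\|^2-\nu_j$. The $\|\vz\|^2$ terms cancel and we get
\[
2(\vc_j-\vc_m)^T\vz \le \|\vc_j\|^2-\nu_j-\|\vc_m\|^2+\nu_m, \qquad j\neq m.
\]
This is the $j$-th row of $A^{(m)}\vz\le\beta^{(m)}$, with $V-1$ rows, matching the definitions in the statement. I will work with closures of cells. The strict versus non-strict distinction only affects measure-zero boundaries and is absorbed by the margin $\epsilon>0$, so I will fix the convention that separability means strict separation with a positive margin.

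\textbf{Step 2: reduce separability to half-space containment.} By definition, $U_{\sM}$ and $U_{\sN}$ are (strictly) linearly separable iff there exist $\vv,\alpha,\epsilon>0$ with
\begin{itemize}[nosep]
    \item $\sup_{\vz\in\omega_m}\vv^T\vz\le\alpha-\epsilon$ for all $m$, and
    \item $\sup_{\vz\in\omega_n}(-\vv)^T\vz\le-\alpha-\epsilon$ for all $n$.
\end{itemize}
This holds because a union lies in a half-space iff each member does.

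\textbf{Step 3: apply affine Farkas to each containment.} For a nonempty polyhedron $P=\{\vz: A\vz\le\beta\}$, the inequality $\vv^T\vz\le\gamma$ holds on all of $P$ iff there is $\lambda\ge0$ with $A^T\lambda=\vv$ and $\beta^T\lambda\le\gamma$. This is LP duality: $\max\{\vv^T\vz : A\vz\le\beta\}=\min\{\beta^T\lambda : A^T\lambda=\vv,\ \lambda\ge0\}$.
\begin{itemize}[nosep]
    \item For $m$, take $\gamma=\alpha-\epsilon$; this yields $\lambda_m$.
    \item For $n$, use the direction $-\vv$ with $\gamma=-\alpha-\epsilon$; this yields $\mu_n$.
\end{itemize}

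\textbf{Step 4: check both directions.}
\begin{itemize}[nosep]
    \item Sufficiency is the easy direction of Farkas. For $\vz\in\omega_m$ we have $\vv^T\vz=\lambda_m^TA^{(m)}\vz\le\lambda_m^T\beta^{(m)}\le\alpha-\epsilon$, and symmetrically for $\vz\in\omega_n$.
    \item Necessity uses strong LP duality.
\end{itemize}

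The main obstacle is the case of empty cells. Some tokens may have empty Laguerre cells, in which case the primal LP is infeasible. Nonemptiness of $\omega_m$ is required for the Farkas equivalence as stated: an empty $\omega_m$ trivially satisfies any containment, yet a certificate with $A^{(m)T}\lambda_m=\vv$ need not exist for the given $\vv$. I would handle this by restricting $\sM,\sN$ to nonempty cells, or by noting that infeasibility gives a Farkas ray $\lambda\ge0$ with $A^T\lambda=0$ and $\beta^T\lambda<0$, which can be added to a scaled certificate. Unbounded cells need no special care, since LP duality covers them (an unbounded primal makes the dual infeasible, consistent with non-separability).
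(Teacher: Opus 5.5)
Your proposal is correct and follows essentially the same route as the paper: write each cell as the polyhedron $\{\vz : A^{(m)}\vz\le\beta^{(m)}\}$, reduce containment of the union in a half-space to containment of each cell, and apply Farkas' lemma cell by cell to obtain $\lambda_m$ and $\mu_n$. You are more careful than the paper, which ignores both the strict versus non-strict issue and empty cells. Restricting to nonempty cells is the repair that actually works. The ``add a Farkas ray'' alternative does not: it presupposes that some $\lambda\ge 0$ with $A^{(m)T}\lambda=\vv$ already exists, and this can fail for an empty cell, so the stated equivalence genuinely needs the nonemptiness hypothesis.
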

Proof is in Appendix~\ref{sec:app_category}. As shown in Theorem~\ref{thm:linear2}, the linear separability of two categories is fully determined by the cell centers and weights of both categories, and the resulting feasibility system can be solved via linear programming.

\textbf{Laguerre Geometry Reveals Concept Hierarchy.}
Concepts naturally exhibit inclusion and hierarchy, e.g., "dog" $\subset$ "mammal" $\subset$ "animal". While such asymmetric relations are difficult to detect between two points alone, our region-based concept definition allows a domination relationship to be inferred directly from the centers $\vc$ and weights $\nu$. The Laguerre-Voronoi Diagram was originally introduced as the subdivision induced by a set of hyperspheres $\{(\vc_j, r_j)\}$, where $r_j$ is the radius of the $j$-th hypersphere~\citep{imai1985voronoi}; the distance from a point to this hypersphere is given by $(d(\vz, \vc_j)^2-r_j^2)^{1/2}$, i.e., the length of the tangent segment from $\vz$ to the hypersphere. If cell $j$ fully contains cell $j'$, the center of cell $j'$ falls within cell $j$'s region $\omega_j$. Based on this intuition, we define a domination score and a corresponding domination condition:
\begin{definition}[Domination Score] \label{def:domination}
    The degree to which concept $C_j$ dominates $C_{j'}$ is measured by:
\begin{equation}\label{eq:domination}
\text{Domination}(j \to j') = \nu_j - \nu_{j'} - \|\vc_j - \vc_{j'}\|^2.
\end{equation}
\end{definition}
\begin{theorem}[Concept Domination] \label{thm:domination}
    For two concepts $C_j(\omega_j, \vc_j, \nu_j)$ and $C_{j'}(\omega_{j'}, \vc_{j'}, \nu_{j'})$, $C_j$ dominates $C_{j'}$ (i.e., $\omega_{j'} \subseteq \omega_j$) if $\text{Domination}(j \to j') > 0$.
\end{theorem}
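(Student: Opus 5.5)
The plan is to reduce the inclusion $\omega_{j'} \subseteq \omega_j$ to a sign condition on the pairwise power function. For $\vz \in \R^d$ set
\[
D(\vz) = \big(\|\vz-\vc_j\|^2-\nu_j\big)-\big(\|\vz-\vc_{j'}\|^2-\nu_{j'}\big) = -2\vz^T(\vc_j-\vc_{j'}) + \|\vc_j\|^2-\|\vc_{j'}\|^2-\nu_j+\nu_{j'} .
\]
This function is affine in $\vz$. By Definition~\ref{def:concept}, the cells are open and pairwise disjoint, so $\omega_{j'} \subseteq \omega_j$ can only hold if $\omega_{j'} = \emptyset$. The statement therefore amounts to proving that no point strictly prefers $j'$ to every other site. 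Since $j$ is one of those competitors, it suffices to show $D(\vz) < 0$ wherever $j'$ would otherwise win.

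\textbf{Step 1 (evaluate at the center).} I first evaluate $D$ at $\vz = \vc_{j'}$. Expanding gives
\[
D(\vc_{j'}) = \|\vc_{j'}-\vc_j\|^2-\nu_j+\nu_{j'} = -\text{Domination}(j\to j').
\]
Hence the hypothesis $\text{Domination}(j\to j')>0$ is exactly the statement that $\vc_{j'}$ lies strictly on $j$'s side of the bisector, i.e.\ $\vc_{j'}\notin\omega_{j'}$. This is the paper's stated intuition: the center of cell $j'$ falls inside $\omega_j$.

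\textbf{Step 2 (propagate from the center to the whole cell).} Next I extend the conclusion from the single point $\vc_{j'}$ to every point of $\omega_{j'}$. I would try to use convexity of $\omega_{j'}$ together with the hypersphere picture of \citet{imai1985voronoi}, where $r_j=\sqrt{\nu_j}$. The aim is to show that the sublevel set $\{D\ge 0\}$, which is a closed half-space, contains no point at which $j'$ beats all remaining sites. The argument would combine the power inequalities against the other sites $k\neq j,j'$ with the positive margin $\text{Domination}(j\to j')$.

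\textbf{Main obstacle.} I expect Step 2 to be the crux, and I expect it to fail without further hypotheses. When $\vc_j\neq\vc_{j'}$, the function $D$ is a nonconstant affine map, so the half-space $\{D>0\}$, on which $j'$ beats $j$, is always nonempty. In a diagram with only the two sites $j,j'$, the cell $\omega_{j'}$ is exactly this half-space, and it is not contained in $\omega_j$, whatever the domination score.

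The inclusion can therefore hold only in two situations. The first is the concentric case $\vc_j=\vc_{j'}$, where $D\equiv\nu_{j'}-\nu_j<0$ and $\omega_{j'}=\emptyset$. The second is when the other sites cover $\{D>0\}$, and that is a property of the full diagram rather than of the pair $(j,j')$ alone.

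My first attempt would therefore be to prove the concentric case rigorously. I would then try to identify the additional covering condition on the remaining sites that makes $\omega_{j'}$ empty. If no such condition follows from $\text{Domination}(j\to j')>0$ alone, I would report that the theorem as literally worded is false in general, using the two-site example above as the counterexample.
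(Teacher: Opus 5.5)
Your diagnosis is correct, and the failure you anticipate in Step~2 is exactly where the paper's own proof breaks. The paper's argument is your Step~1: it substitutes $\vz=\vc_{j'}$ to get $\|\vc_{j'}-\vc_j\|^2-\nu_j<-\nu_{j'}$. It then closes Step~2 in one line. Since $-\nu_{j'}=\min_{\vz}(\|\vz-\vc_{j'}\|^2-\nu_{j'})$ and $j$'s Laguerre distance is already smaller at that minimizer, it concludes that ``every point assigned to $C_{j'}$ is also assigned to $C_j$.'' That is a non sequitur. Comparing $j$'s Laguerre distance at one point with the minimum of $j'$'s says nothing about other points, because $j$'s distance grows too. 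In your notation, $D(\vc_{j'}+\vx)=-\text{Domination}(j\to j')+2\vx^T(\vc_{j'}-\vc_j)$. This turns positive along $\vx=t(\vc_{j'}-\vc_j)$ as soon as $t>\text{Domination}(j\to j')/(2\|\vc_{j'}-\vc_j\|^2)$. The hypersphere picture cannot rescue Step~2 either. Even if $S_{j'}$ lies strictly inside $S_j$ (non-concentrically), their radical hyperplane lies outside both spheres, so the inner sphere still owns a nonempty half-space.

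So drop the hedging and state the counterexample as your result. Concretely, in $\R$ take $\vc_j=0$, $\nu_j=10$, $\vc_{j'}=1$, $\nu_{j'}=0$. Then $\text{Domination}(j\to j')=9>0$, yet $\omega_{j'}=(5.5,\infty)$ and $\omega_j=(-\infty,5.5)$.

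The counterexample extends to larger diagrams and to real unembedding layers:
\begin{itemize}
\item Adding further sites with, say, $\nu_k=-1000$ keeps $z=6$ in $\omega_{j'}$, since its Laguerre distances to $j'$ and $j$ are $25$ and $26$.
\item Setting $\mW_j=2\vc_j$ and $\vb_j=\nu_j-\|\vc_j\|^2$ realizes any such diagram as an unembedding layer via Theorem~\ref{thm:lvd}.
\item The bias-free case fails too: $\vc_j=2$, $\vc_{j'}=1$ with $\nu=\|\vc\|^2$ gives $\text{Domination}(j\to j')=2>0$ but $\omega_{j'}=(-\infty,0)$.
\end{itemize}

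Hence no covering condition follows from the pairwise hypothesis. The only case provable from the pair alone is the concentric one you identified, where the hypothesis reads $\nu_j>\nu_{j'}$ and gives $\omega_{j'}=\emptyset$. Your remark that disjointness makes $\omega_{j'}\subseteq\omega_j$ equivalent to $\omega_{j'}=\emptyset$ is also right. It shows how much more the stated conclusion claims than the score can support.

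One small correction: $D(\vc_{j'})<0$ only places $\vc_{j'}$ on $j$'s side of the $(j,j')$ bisector. It implies $\vc_{j'}\notin\omega_{j'}$ but is not equivalent to it. Nor does it put $\vc_{j'}$ in $\omega_j$, since a third site may own that point.
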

Proof is in Appendix~\ref{sec:app_concept}. Notably, our domination scores can be inferred directly from the unembedding layer itself, without relying on a separately pre-defined attribute set and attribute directions, as required by~\citet{xiong2026lattice}.
\subsubsection{Transformer Layers Partition the Representation Spaces into $\epsilon$-Effective Piecewise-Linear Regions} \label{sec:regions}
We have shown that the unembedding layer partitions the representation space into an LVD, with each cell associated with a token. The next question is what partition is induced by an LLM at each transformer layer. Here, we highlight a connection to the line of work on LLM "lenses" (e.g., Logit Lens~\citep{lesswrong_logit_lens}, Tuned Lens~\citep{belrose2023eliciting}, Future Lens~\citep{pal2023future}, and Patchscopes~\citep{ghandeharioun2024patchscopes}), which aim to interpret the semantic meaning of each hidden residual-stream vector. A central difficulty with such lenses, however, is that there is no ground truth for the "true" meaning of a hidden representation, so different lenses can only be benchmarked indirectly, via downstream tasks such as attribute extraction~\citep{ghandeharioun2024patchscopes}. We argue that the partitions revealed in this section not only chart a geometric landscape of the representation space at each layer, but also map each response region to its corresponding Laguerre cell at the final layer, thereby attaching a semantic meaning to every hidden region (and hence every hidden vector). Crucially, this meaning is deduced from geometric structure, rather than from the heuristic decoding used in prior lenses.

To this end, in the following sections we (1) review the partition theory for an ordinary deep neural network, (2) extend this theory to LLMs with MLP and attention blocks, and (3) further extend it to non-piecewise-linear activation functions.

\begin{wrapfigure}{r}{0.37\textwidth} % change this value
    \vspace{-10mm}
    \centering
    \subfloat{\includegraphics[width=0.36\textwidth]{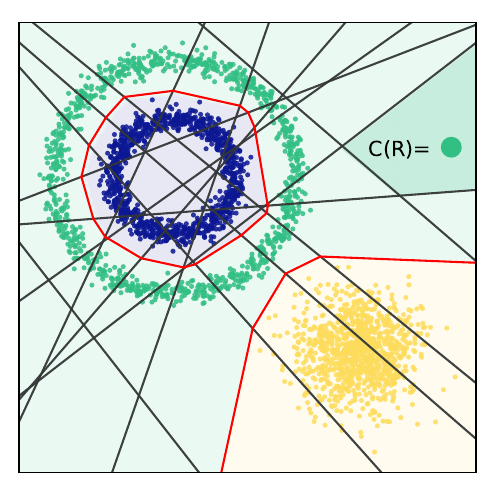}} 
    % \subfloat{\includegraphics[height=1.0in]{figs/2D}} 
    \vspace{-3mm}
	\caption{An example piecewise-linear space subdivision induced by a one-layer ReLU network with three classes \textcolor{blue}{\scalebox{1.5}{$\bullet$}}, \textcolor{SeaGreen}{\scalebox{1.5}{$\bullet$}}, and \textcolor{yellow}{\scalebox{1.5}{$\bullet$}}. Each region in the 2D input space is assigned a label, e.g., $C(R)=$\textcolor{SeaGreen}{\scalebox{1.5}{$\bullet$}}. See \S\ref{sec:app_2d} for more details.}
	\label{fig:2d}
    \vspace{-10mm} % change this value
\end{wrapfigure}
\textbf{Space Partition for LLMs with Piecewise-Linear Activations.}
It is well known that a deep neural network (DNN) with piecewise-linear activation functions (e.g., ReLU, leaky-ReLU, MaxOut) and affine preactivation functions partitions its input space into numerous small piecewise-linear regions. The following theorem is a straightforward reformulation of Theorem 2.1 in \citet{arora2016understanding}, together with the definition of linear (response) regions from \citet{montufar2014number} and \citet{pascanu2013number}; we omit the bound on the number of regions for brevity.
\begin{definition}[Linear Region~\citep{montufar2014number}] \label{def:region}
    A linear region of a piecewise-linear function $F: \R^{in} \rightarrow \R^{out}$ is a maximal connected subset of the input space $\R^{in}$ on which $F$ is linear.
\end{definition}
\begin{theorem}[DNN Piecewise-Linear Partition] \label{thm:piecewise}
    Let $F$ be a feedforward neural network composed of unit operators, defining a function $F: \R^{in} \rightarrow \R^{out}$ of the form
\begin{equation}\label{eq:piecewise}
    F(\vx) = f_{out} \circ g_L \circ f_L \circ \cdots \circ g_1 \circ f_1(\vx),
\end{equation}
    where $f_\ell$ is an affine preactivation function parameterized by $\mW_{\ell}, \vb_{\ell}$, for $\ell \in \{1,...,L\}$, and $g_{\ell}$ is a piecewise-linear activation function. Then $F$ partitions the input space $\R^{in}$, as well as each intermediary space, into piecewise-linear regions.
\end{theorem}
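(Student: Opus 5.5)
The plan is an induction on depth, tracking at each stage a finite polyhedral partition of $\R^{in}$ on which the partial composition is affine. Write $F_\ell = g_\ell \circ f_\ell \circ \cdots \circ g_1 \circ f_1$ for $\ell \in \{0,\dots,L\}$, with $F_0 = \mathrm{id}$. The claim to prove by induction is that there is a finite collection $\mathcal{P}_\ell$ of closed convex polyhedra covering $\R^{in}$, with pairwise disjoint interiors, such that $F_\ell$ restricted to each $P \in \mathcal{P}_\ell$ is affine. The base case $\ell=0$ is trivial: take $\mathcal{P}_0=\{\R^{in}\}$.

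For the inductive step, fix $P \in \mathcal{P}_\ell$ with $F_\ell|_P(\vx) = \mA_P\vx + \vb_P$.

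1. The map $f_{\ell+1}\circ F_\ell$ is again affine on $P$, since affine maps compose to affine maps.
2. The activation $g_{\ell+1}$ is piecewise-linear, so each coordinate is affine on each of finitely many convex pieces of $\R$. For ReLU, leaky-ReLU and Abs these pieces are the half-lines $\{t\ge0\}$ and $\{t\le0\}$. For MaxOut, the pieces are the regions where a given affine argument attains the max, which are cut out by finitely many linear inequalities.
3. Pulling these back through the affine map $f_{\ell+1}\circ F_\ell|_P$ gives constraints of the form $\langle (\mW_{\ell+1}\mA_P)_k, \vx\rangle + (\mW_{\ell+1}\vb_P+\vb_{\ell+1})_k \ge 0$ or $\le 0$, which are linear inequalities in $\vx$.
4. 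Intersecting $P$ with every sign or argmax pattern over the neurons of layer $\ell+1$ refines $P$ into finitely many convex polyhedra. On each refined piece every neuron is in a fixed linear mode, so $F_{\ell+1}$ is affine there.
5. Collecting these pieces over all $P$, and discarding empty or lower-dimensional ones, gives $\mathcal{P}_{\ell+1}$.

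After $L$ steps, composing with the affine $f_{out}$ preserves affineness on each cell of $\mathcal{P}_L$. This yields a piecewise-linear partition of the input space.

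To reconcile this with Definition~\ref{def:region}, the linear regions are maximal connected sets on which $F$ is linear. I will take unions of adjacent cells of $\mathcal{P}_L$ that carry the same affine map, adjacency meaning that two cells share a facet. I then take connected components. Each such component is a maximal connected set of linearity, so the linear regions are finite unions of cells of $\mathcal{P}_L$.

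For the intermediary spaces, the same argument applies to the tail network $f_{out}\circ g_L\circ\cdots\circ g_{\ell+1}\circ f_{\ell+1}$ viewed as a function on the $\ell$-th hidden space. This shows that each intermediary space is likewise partitioned. Alternatively, one can push $\mathcal{P}_\ell$ forward through $F_\ell$ to obtain the regions actually visited in that space.

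I expect the main obstacle to be the bookkeeping around the word ``partition''. Cells overlap on their boundaries, and the pieces on which the function is linear need not coincide with maximal linear regions, because adjacent activation patterns can induce the same affine map. I would handle this by working with closed cells with disjoint interiors, or equivalently half-open sign conventions, and by stating the result for the refinement $\mathcal{P}_L$. Maximal linear regions then arise by the merging step described above; this is exactly the sense in which \citet{arora2016understanding} state their Theorem 2.1.
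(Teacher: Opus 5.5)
Your proof is correct. It reaches the statement by a more explicit, self-contained route than the paper, which gives no argument of its own here. The paper presents the statement as a reformulation of Theorem~2.1 of \citet{arora2016understanding}. When it needs the same fact again, in Lemma~\ref{thm:shortcut} and the proof of Theorem~\ref{thm:dnn_piecewise}, it simply asserts two things: each block with a piecewise-linear activation is piecewise affine, and compositions of piecewise-affine maps stay piecewise affine.

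Your depth induction proves exactly that closure property, one layer at a time, by pulling activation breakpoints back through the affine map on each current cell and intersecting. So the underlying idea is the same. What your version adds is an explicit finite family of convex polyhedral cells indexed by activation patterns. This is the object that the paper's later ``tree'' picture and Definition~\ref{def:region_label} tacitly rely on. You also separate activation cells from maximal linear regions in the sense of Definition~\ref{def:region}, which the paper never does. For the theorem as stated, $\mathcal{P}_L$ together with the tail-network argument for the intermediary spaces already suffices.

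Two optional parts need tightening.

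\textbf{Pushforward alternative.} Pushing $\mathcal{P}_\ell$ forward through $F_\ell$ does not partition the hidden space. $F_\ell$ is not injective, so images of different cells can overlap, and the images need not cover the space (ReLU outputs lie in the nonnegative orthant). Keep the tail-network version, which is also how the paper uses $G^{(\ell)}$.

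\textbf{Maximality in the merging step.} This claim needs an argument, and it holds only for the interior (open-set) reading of Definition~\ref{def:region}. Let $O$ be a connected open set on which $F$ is affine. A path in $O$ can be chosen to avoid the finitely many faces of codimension at least two. It then passes between cells only across codimension-one intersections, and every cell whose interior meets $O$ carries the same affine map. Hence $O$ lies in one merged component. Accordingly, read ``adjacent'' as ``meeting in a codimension-one set'', not as having a literally common facet.

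Under the literal closed-set reading, the definition does not produce a partition at all. Take $F(x,y)=\min(|x|,|y|)=|x|-\mathrm{ReLU}(|x|-|y|)$. The cells $\{0\le x\le y\}$ and $\{0\le x\le -y\}$ both carry the map $x$ and touch only at the origin. Their union is therefore a connected set on which $F$ is affine, and it overlaps the maximal set for the map $y$ along the ray $\{0\le x=y\}$.
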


For example, Figure~\ref{fig:2d} illustrates the precise space partitioning induced by a one-layer ReLU network. Building on this, we can develop a piecewise-linear partition for a single-input transformer.
\begin{theorem}[Single-Input Transformer Piecewise-Linear Partition] \label{thm:dnn_piecewise}
    Let $F$ be an $L$-layer transformer model with feedforward networks $\{\text{MLP}^{(\ell)}\}$, each using a piecewise-linear activation function; value matrices $\{\mW^{V(\ell)}\}$ and output matrices $\{\mW^{O(\ell)}\}$, for $\ell \in \{1,...,L\}$; and a language modeling head $F_{un}$, given by:
\begin{align}\label{eq:dnn_piecewise}
        F(\vx) &= F_{un} \circ M_L \circ A_L \circ \cdots \circ M_{\ell} \circ A_{\ell} \circ \cdots \circ M_1 \circ A_1(\vx), \\
        M_{\ell}(\vx) &= \vx + \text{MLP}^{(\ell)}(\vx), \\
        A_{\ell}(\vx) &= \vx + \vx \mW^{V(\ell)} \mW^{O(\ell)}.
\end{align}
    Then $F$ partitions the input space, as well as each intermediary representation space, into piecewise-linear regions.
\end{theorem}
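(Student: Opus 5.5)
The plan is to reduce the statement to Theorem~\ref{thm:piecewise} by writing every operator in $F$ as a composition of affine maps and piecewise-linear activations. The one structural issue is the residual (skip) connections. I will handle them in the natural way: by showing that a map of the form $\vx \mapsto \vx + G(\vx)$ with $G$ piecewise linear is itself piecewise linear. Everything is then closed under composition.

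\textbf{Step 1: the attention operator.} In the single-input setting the softmax over one key is identically $1$, so the attention output is $\vx\mW^{V(\ell)}\mW^{O(\ell)}$. Hence $A_\ell(\vx)=\vx(\mI+\mW^{V(\ell)}\mW^{O(\ell)})$ is a global linear map. It introduces no breakpoints and needs no activation. This is exactly why the query and key matrices do not appear in the statement.

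\textbf{Step 2: the MLP operator.} $\text{MLP}^{(\ell)}(\vx)=\mW_2\, g(\mW_1\vx+\vb_1)+\vb_2$ is a one-hidden-layer network. By Theorem~\ref{thm:piecewise} it partitions $\R^d$ into finitely many polyhedral regions $R$, cut out by the activation hyperplanes, and on each region it agrees with an affine map $\vx\mapsto \mA_R\vx+\vb_R$. Then $M_\ell(\vx)=(\mI+\mA_R)\vx+\vb_R$ on each $R$. So $M_\ell$ is piecewise linear on the same partition. Equivalently, $M_\ell$ is the network $f_{out}\circ g\circ f_1$ with the augmented preactivation $f_1(\vx)=(\vx,\mW_1\vx+\vb_1)$. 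Its activation acts as the identity on the first $d$ coordinates and as $g$ on the rest, which is still piecewise linear. The output map is $f_{out}(\vx,\vz)=\vx+\mW_2\vz+\vb_2$. This puts $M_\ell$ literally in the form of \eqref{eq:piecewise}.

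\textbf{Step 3: composition.} $F_{un}$ is affine. Composing the linear maps $A_\ell$, the piecewise-linear maps $M_\ell$, and the affine head gives a function of the form \eqref{eq:piecewise}. Absorb each $A_\ell$ into the next affine preactivation, since affine maps compose to affine maps. Theorem~\ref{thm:piecewise} then yields the partition of the input space.

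For each intermediary space, I will apply the same argument to the truncated composition starting at that layer. I will also note that the region structure on the input is the common refinement of the pulled-back partitions. Concretely, a region is defined by a fixed activation pattern across all layers. Each pattern cuts out a finite intersection of half-spaces, because under a fixed pattern every preactivation is affine in $\vx$. On each such set $F$ is affine, and maximal connected unions give the linear regions of Definition~\ref{def:region}.

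\textbf{Main obstacle.} The main point needing care is the residual connection, which falls outside the plain form of Theorem~\ref{thm:piecewise}. The augmented-coordinate trick in Step~2 is what I will rely on to handle it.

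The second subtlety is making the single-input attention claim precise. It holds because softmax over a single position is trivially $1$, and layer normalization is assumed absent or folded into the weights, as the statement's definition of $A_\ell$ implicitly does. I will state this assumption explicitly.
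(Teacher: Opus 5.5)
Your proposal is correct and follows the paper's proof. Single-token attention collapses to the affine map $\vx(\mI+\mW^{V(\ell)}\mW^{O(\ell)})$; the residual around a piecewise-affine MLP stays piecewise affine via $(\mI+\mA_R)\vx+\vb_R$ on each region, which is exactly the paper's shortcut-connection lemma; and closure under composition handles the full stack and each intermediary space. Your augmented-coordinate embedding of $M_\ell$ into the form of Theorem~\ref{thm:piecewise} only repackages that composition step, which the paper simply asserts as closure of piecewise-affine maps under composition.
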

\textit{Proof Sketch.} We decompose the model $F$ into piecewise-affine operators, namely the MLP and attention blocks. An MLP block with a shortcut connection remains piecewise linear. For the attention layers, since the input consists of a single token with no other context to attend to, the attention output reduces to a linear transformation of the input. Full proof is in Appendix~\ref{sec:app_piecewise}.

\textbf{Extending to Non-Piecewise-Linear Activations.}
Most modern LLMs use smooth, non-linear activation functions, such as the Gaussian Error Linear Unit (GELU) in GPT and the Swish-Gated Linear Unit (SwiGLU) in LLaMA, for which the linear partition theorem above does not directly hold. To address this, we introduce a relaxation that replaces each smooth activation function with a piecewise-linear approximation, so that the notion of a region still applies to these LLMs.
\begin{proposition}[Linearization of Non-Piecewise-Linear Activations] \label{thm:nonlinear}
    Let $\sD$ be a dataset and let
$F(\vx) = f_{out} \circ g_L \circ f_L \circ \cdots \circ g_1 \circ f_1(\vx)$
    be a feedforward network in which $\{g_{\ell}\}_{\ell \in \{1,...,L\}}$ are smooth, non-piecewise-linear activation functions. By the Universal Approximation Theorem~\citep{cybenko1989approximation, hornik1989multilayer, funahashi1989approximate}, there exists a family of multi-layer ReLU networks $\{\hat{g}_{\ell}\}_{\ell \in \{1,...,L\}}$ such that the composition
$$\hat{F}(\vx) = f_{out} \circ \hat{g}_L \circ f_L \circ \cdots \circ \hat{g}_1 \circ f_1(\vx)$$
    satisfies
$$\sup_{\vx \in \sD} \|F(\vx) - \hat{F}(\vx)\| < \epsilon.$$
\end{proposition}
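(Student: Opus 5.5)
The plan is a layer-by-layer perturbation argument on a compact set, with the errors propagated through Lipschitz bounds. Since $\sD$ is a finite dataset (or, more generally, a bounded set), its closure $K_0$ is compact. We then trace compact sets through the network. Put $K_1 = f_1(K_0)$ and let $B_1$ be a closed bounded box containing $K_1$ and all points within distance $1$ of it. Continuing recursively, $B_\ell$ is a bounded box containing a unit neighbourhood of $f_\ell(g_{\ell-1}(B_{\ell-1}))$. We enlarge the boxes because the approximate network may drift slightly off the exact trajectory, so each approximant must be accurate on a margin around the exact image. Each $B_\ell$ is compact, since affine maps and continuous activations send bounded sets to bounded sets.

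On each compact $B_\ell$, the activation $g_\ell$ is continuous, in fact smooth. It is therefore Lipschitz on $B_\ell$ with some constant $\Lambda_\ell$, and it is uniformly continuous there. The universal approximation theorem, in its ReLU form, or more elementarily piecewise-linear interpolation of a continuous coordinatewise function on a grid, which is exactly realizable by a one-hidden-layer ReLU network, gives for any $\delta_\ell>0$ a ReLU network $\hat g_\ell$ with $\sup_{B_\ell}\|g_\ell-\hat g_\ell\|<\delta_\ell$. Denote by $\|\mW_\ell\|$ the operator norm of the weight matrix in $f_\ell$.

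Next comes the error recursion. Write $z_\ell$ for the exact post-activation at layer $\ell$ and $\hat z_\ell$ for the approximate one, with $e_\ell=\sup_{\vx\in\sD}\|z_\ell-\hat z_\ell\|$. Suppose $\hat z_{\ell-1}$ lies within distance $1$ of the exact trajectory in the sense that $f_\ell(\hat z_{\ell-1})\in B_\ell$; this holds once $\|\mW_\ell\| e_{\ell-1}\le 1$. Then the triangle inequality gives
\[
e_\ell \le \|g_\ell(f_\ell z_{\ell-1})-g_\ell(f_\ell \hat z_{\ell-1})\| + \|g_\ell(f_\ell\hat z_{\ell-1})-\hat g_\ell(f_\ell\hat z_{\ell-1})\| \le \Lambda_\ell\|\mW_\ell\| e_{\ell-1}+\delta_\ell .
\]
The final output error is bounded by $\|\mW_{out}\| e_L$.

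Finally we choose the tolerances backwards. We need $\|\mW_{out}\|e_L<\epsilon$, and we also need every intermediate $e_{\ell}$ small enough that $\|\mW_{\ell+1}\|e_\ell\le1$, which keeps the trajectory inside the next box. Because the recursion is linear with fixed constants, taking $\delta_\ell=\eta$ for all $\ell$ gives $e_\ell\le \eta\sum_{k\le \ell}\prod_{k<i\le \ell}\Lambda_i\|\mW_i\|$. Choosing $\eta$ small enough meets both requirements. Induction on $\ell$, starting from $e_0=0$, then yields $\sup_{\vx\in\sD}\|F(\vx)-\hat F(\vx)\|<\epsilon$.

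The main obstacle is this domain issue. Universal approximation is only uniform on compact sets, and the approximate network evaluates $\hat g_\ell$ at perturbed inputs rather than exact ones. The enlarged boxes $B_\ell$, together with the inductive invariant that perturbations stay within the unit margin, are what resolve it. For multivariate gated activations such as SwiGLU, $g_\ell$ is a continuous map on a compact set, so the same argument applies with the multivariate universal approximation theorem.
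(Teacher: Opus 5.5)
Your proof is correct and follows the paper's route: replace each $g_\ell$ by a ReLU network $\hat g_\ell$ obtained from the universal approximation theorem. The paper justifies the proposition only by that one-line citation, with no appendix proof. Your version fills in the step the citation glosses over: UAT is uniform only on compact sets, while $\hat g_\ell$ is evaluated at perturbed inputs. Fixing the enlarged boxes $B_\ell$ and the Lipschitz constants first, running the recursion $e_\ell \le \Lambda_\ell\|\mW_\ell\|e_{\ell-1}+\delta_\ell$, and only then choosing $\eta$ handles this correctly, so your argument is a rigorous completion of the paper's sketch.
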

We use a separate ReLU-based network $\hat{g}_{\ell}$ to approximate and replace each smooth activation function $g_{\ell}$. For sufficiently small $\epsilon$, the curved decision boundaries of $F$ are well approximated by the piecewise-linear boundaries induced by $\hat{F}$. We refer to this relaxation as $\epsilon$-\emph{effective} with respect to a dataset $\sD$, since natural language occupies only a subset of the full input space $\R^{in}$: many regions of $\R^{in}$ are semantically or grammatically invalid for any real-world natural language dataset, so approximation error need only be controlled where real data actually lies. Under this linearization, $\hat{F}(\vx)$ is guaranteed to preserve the classification of every data-label pair in $\sD$.
% With this linearization, we treat a GELU/SwiGLU-based LLM the same as a ReLU-based LLM using the approximated composition $\hat{F}$. 

Under this view, the computation of a full transformer can be seen as a flow of hidden points through the response regions of each space, from the first input space (the embedding space), through the representation space at each layer $\ell$, up to the final space (the unembedding space). Flows that terminate in the same Laguerre-Voronoi cell form a tree structure (see also Figure 2(c) in \citet{montufar2014number}). To track every region visited during a forward pass, we assign a "label" to each region — namely, the label of the final region (i.e., a Laguerre-Voronoi cell) that the flow ultimately reaches.
\begin{definition}[Region Correspondence] \label{def:region_label}
    Let $\vh$ be a hidden vector at layer $\ell$, and let $R^{(\ell)}(\vh)$ denote the linear region it resides in at this layer. We assign a label $C$ to this region as
\begin{equation}\label{eq:region_label}
        C(R^{(\ell)}(\vh)) = \argmax_{j \in \{1,...,V\} } G_j^{(\ell)}(\vh),
\end{equation}
    where $G^{(\ell)} = f_{out} \circ g_L \circ f_L \circ \cdots \circ g_{\ell} \circ f_{\ell}$, and $f_{out}$ is a classification head with $V$ classes, over which an LVD is defined as in Theorem~\ref{thm:lvd}. In the context of an LLM, $j$ corresponds to the top-1 predicted token from the language modeling head.
\end{definition}
We reuse the letter $C$ because this definition likewise attaches a semantic concept to each linear region in a hidden space. Thus far, we have shown that the unembedding space is partitioned into a diagram of concepts, and that each intermediary space is partitioned into a diagram of concepts as well.

\subsection{Explaining the Geometry of a Multiple-Input LLM} \label{sec:multiple_input}
Unlike an MLP block, the non-linearity of an MHA block resides in the softmax operation, which is difficult to interpret or approximate with a (piecewise-)linear operation. To obtain a geometric landscape for the internals of a multiple-input LLM, the main innovation of this paper is to separate the cross-attention stream from the residual stream (while retaining the self-attention stream). This separation lets us decompose the residual stream at token position $i$ into a flow within a piecewise-linear function $F$, together with transport terms between response regions.

\subsubsection{Decomposition of a Multiple-Input LLM into Response Regions and Inter-Region Transport}
Consider an LLM $F(\cdot)$ taking $\mX_{\{1,...,i\}}$ (written as $\mX$ for brevity) as input, where each attention block at layer $\ell$ has value matrix $\mW^{V(\ell)}$, output matrix $\mW^{O(\ell)}$, and $H$ heads. The value vectors at layer $\ell$ are given by $\mV^{(\ell)} = \mX^{(\ell)} \mW^{V(\ell)}$.
In the $\tau$-th attention head, the weighted sum is $\pi^{(\ell, \tau)} \mV^{(\ell, \tau)}$, where $\pi^{(\ell, \tau)} \in \R^i$, $\mV^{(\ell, \tau)} \in \R^{i \times d_h}$, and $d_h = d/H$. Here,
$$\pi_s^{(\ell, \tau)}, \quad s \in \{1,...,i\}, \; \ell \in \{1,...,L\}, \; \tau \in \{1,...,H\},$$
is the attention weight between the $i$-th token and the $s$-th token. The output of the attention block is thus
\begin{equation}\label{eq:mha}
\Delta \text{attn}_{i}^{(\ell)} = \text{Concat}(\{\pi^{(\ell, \tau)} \mV^{(\ell, \tau)}\}_{\tau \in\{1,...,H\} })\mW^{O(\ell)}.
\end{equation}

\begin{wrapfigure}{r}{0.41\textwidth} % change this value
    \vspace{-4mm}
    \centering
    \subfloat{\includegraphics[width=0.40\textwidth]{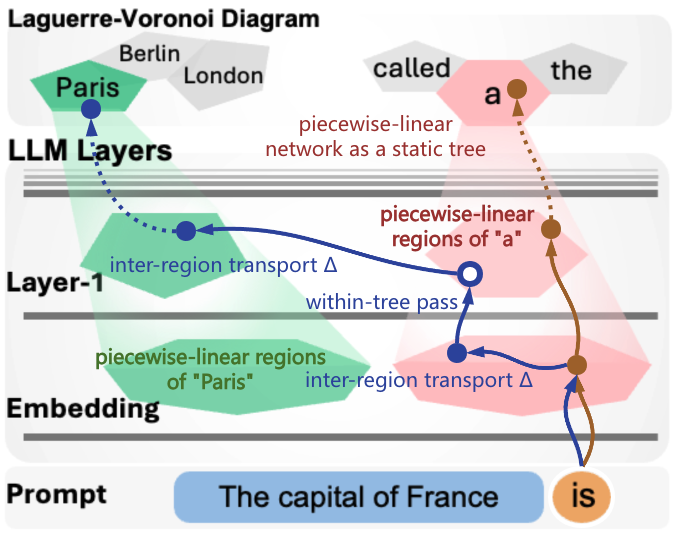}} 
    % \subfloat{\includegraphics[height=1.0in]{figs/2D}} 
    % \vspace{-3mm}
	\caption{Schematic illustration of inter-region transport within an LLM. The orange arrows indicate the flow from a single token "is" and the blue arrows denote the entire prompt "The capital of France is".}
	\label{fig:piecewise}
    \vspace{-8mm} % change this value
\end{wrapfigure}
This lets us decompose the full transformer computation into a flow "inside" the piecewise-linear network of Theorem~\ref{thm:dnn_piecewise}, together with transport between regions of different labels. Specifically, we dissect a transformer block into three steps.
\begin{enumerate}[nosep, label=(\roman*)]
\item At the beginning of layer $\ell$, the hidden vector $\vh_{i}^{(\ell)}$ arrives at a region $R^{(\ell)}(\vh_{i}^{(\ell)})$. In the absence of contextual tokens (as in Theorem~\ref{thm:dnn_piecewise}), $\vh_{i}^{(\ell)}$ passes through $A_{\ell}$ and $M_{\ell}$, and all of its subsequent regions share the same label.
\item In the multiple-input setting, however, $\vh_{i}^{(\ell)}$ is displaced by an incremental vector
$$\Delta = \Delta \text{attn}_{i}^{(\ell)} - \vh_{i}^{(\ell)} \mW^{V(\ell)} \mW^{O(\ell)},$$
arriving at a new region $R^{(\ell)}(\vh_{i}^{(\ell)} + \Delta)$, which may carry a different label.
\item The new hidden vector $\vh_{i}^{(\ell)} + \Delta$ continues to flow through exactly \textit{the same} piecewise-linear network $F(\cdot)$, but has moved from the tree rooted at $C(R^{(\ell)}(\vh_{i}^{(\ell)}))$ to a tree rooted at a different LVD cell. Note also that, in the later layers of an LLM, the incremental vector $\Delta$ may \textit{not} change the label of the new linear region, indicating that the LLM is converging toward its final answer.
\end{enumerate}

See Figure~\ref{fig:piecewise} for an illustration. This dissection shows that, for any hidden vector $\vh_{i}^{(\ell)}$ at layer $\ell$, passing it through the remaining network $G^{(\ell)}$ alone — that is, isolating it from all contextual effects at subsequent layers $\{\ell,...,L\}$ — reveals the label of its response region, giving a way to decipher the semantic meaning of any hidden representation. We refer to this procedure as the \texttt{Geometric Lens} for inspecting LLM hidden representations.
\begin{figure}
    \centering
    \subfloat{\includegraphics[height=5.3cm]{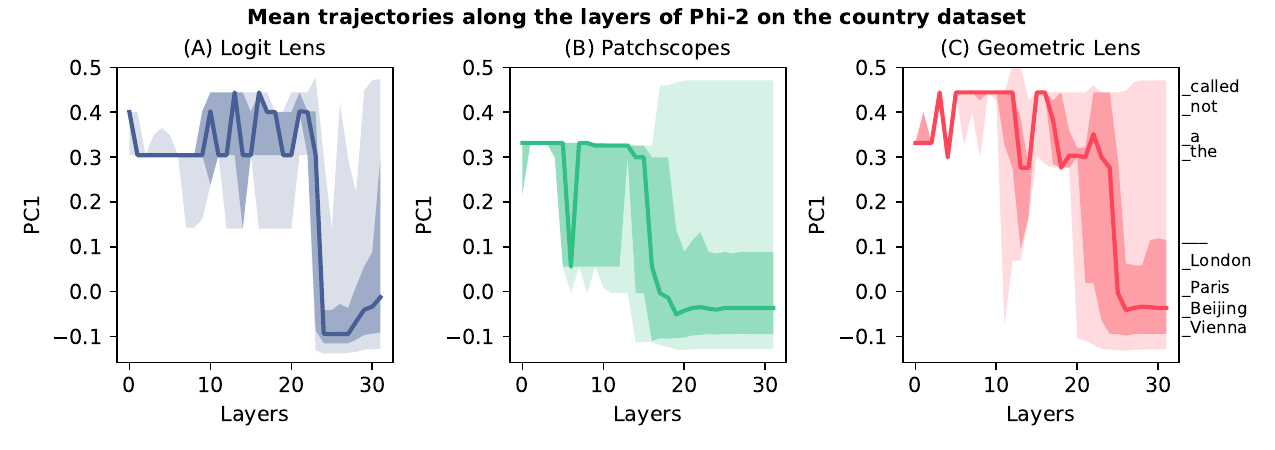}} 
    \caption{The mean trajectories along the layers of the Phi-2 model on the country dataset. The [25\%, 75\%] and [5\%, 95\%] percentile intervals are shown in shadows. The y-axis indicates the first principal component of the token embedding matrix. For example, some representative tokens are shown on the right-hand side.  (A) Logit Lens still fluctuates during the final stage of the prediction (layer 25 to layer 32). (B) Patchscopes shows a premature convergence to the final token around layer 18. (C) The Geometric Lens is stable once it reaches the correct token.}\label{fig:lenses}
    \vspace{-1mm}
\end{figure}
\subsubsection{Geometric Lens for Residual Stream Elucidation}
Before introducing the Geometric Lens, we first revisit existing methods for eliciting a distribution over the vocabulary from a latent vector.
\begin{itemize}[nosep, topsep=0pt,leftmargin=*]
\item \textbf{Logit Lens.~\citep{lesswrong_logit_lens}} An LLM makes its prediction from the final hidden vector via the model head $F_{un}(\vh_{i}^{(L)})$. Logit Lens directly "moves" the hidden vector $\vh_{i}^{(\ell)}$ at layer $\ell$ to the final space at layer $L$, generating the vocabulary logits as $y_\text{LogitLens}(\vh_{i}^{(\ell)}) = F_{un}(\vh_{i}^{(\ell)})$. This yields a coarse decoding, due to representation drift across layers.
\item \textbf{Tuned Lens.~\citep{belrose2023eliciting}} To mitigate this representation drift, Tuned Lens additionally learns an affine transformation from each layer to the final layer, i.e., $y_\text{TunedLens}(\vh_{i}^{(\ell)}) = y_\text{LogitLens}(\mA^{(\ell)} \vh_{i}^{(\ell)} + \vb^{(\ell)})$. However, Tuned Lens requires substantial training data (e.g., the Pile validation set~\citep{gao2020pile}) to learn this mapping for each layer.
\item \textbf{Patchscopes.~\citep{ghandeharioun2024patchscopes}} As a more versatile decoding method, Patchscopes instead feeds $\vh_{i}^{(\ell)}$, as a hidden vector, into a separate model $\tilde{\gM}$ alongside a leading prompt (e.g., \textit{"The multi-tokens present here are "}), inducing the next token as the decoding of $\vh_{i}^{(\ell)}$. This method depends heavily on the choice of model $\tilde{\gM}$, the target prompt design, and the patching position (i.e., from layer $\ell$ in $\gM$ to layer $\ell'$ in $\tilde{\gM}$).
\item \textbf{Geometric Lens (ours).} To decode $\vh_{i}^{(\ell)}$, we directly apply Eq.~\eqref{eq:region_label} to obtain the label of its response region, $C(R^{(\ell)}(\vh_{i}^{(\ell)}))$. To do so, we force the self-attention weight to $1$ at every subsequent layer, i.e.,
$$
\pi_s^{(\ell', \tau)} = \left\{ \begin{array}{rcl}
1, &\text{if}\ s = i,\\
0, &\text{otherwise},
\end{array}\right. \quad \forall \ell' \in \{\ell,...,L\}, \; \forall \tau \in \{1,...,H\},
$$
    so that the contextual incremental vector $\Delta = \vzero$ at every layer after $\ell$. We then run a forward pass to obtain $y_\text{GeometricLens}(\vh_{i}^{(\ell)}) = G^{(\ell)}(\vh_{i}^{(\ell)})$.
\end{itemize}

\subsubsection{Laguerre Autoencoder for 2D Visualization of Laguerre Geometry}
\begin{wrapfigure}{r}{0.31\textwidth} % change this value
    \vspace{-8mm}
    \centering
    \subfloat{\includegraphics[width=0.30\textwidth]{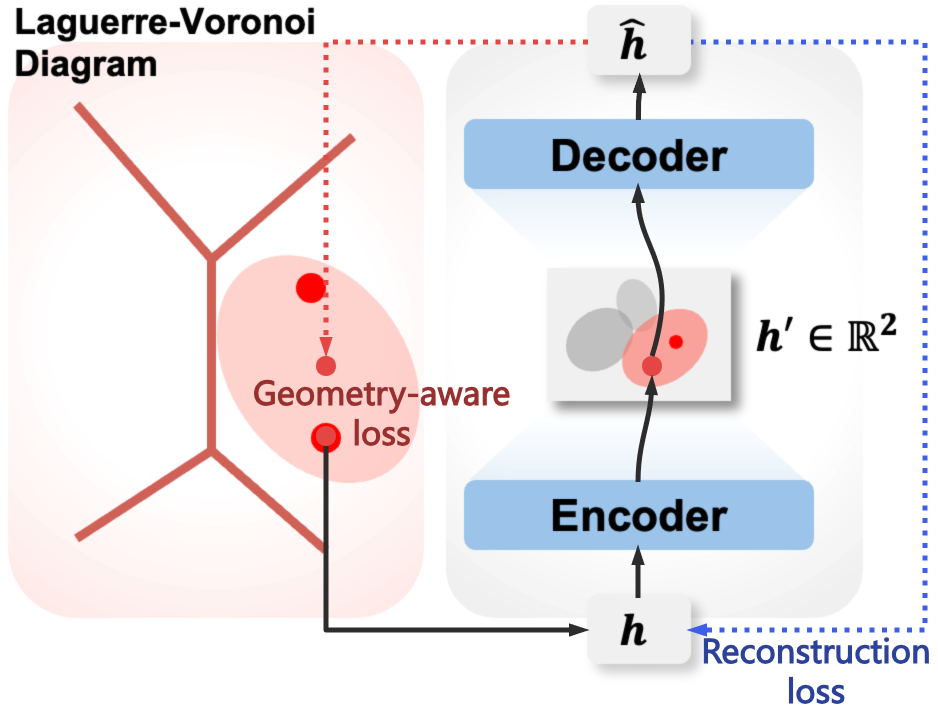}} 
    % \subfloat{\includegraphics[height=1.0in]{figs/2D}} 
    % \vspace{-3mm}
	\caption{Schematic illustration of the Laguerre Autoencoder.}
	\label{fig:LAE}
    \vspace{1mm} % change this value
\end{wrapfigure}
To visually inspect the Laguerre Geometry of LLMs proposed above, we need a tool that can jointly visualize (1) final-layer representations, (2) intermediary-layer representations, (3) representation trajectories, (4) LVD cells, and (5) decision boundaries. To this end, we design a simple yet effective autoencoder with cyclic connections and the LVD in the loop. For a hidden vector $\vh$ in the final layer, with ground-truth label $y(\vh)$ (i.e., its LVD cell), an encoder maps it to a point in the $\R^2$ plane, $\vh' = E(\vh)$, and a decoder maps $\vh'$ back to $\R^d$ as $\hat{\vh} = D(\vh')$. To train the autoencoder, the loss combines two terms: a reconstruction loss between $\vh$ and $\hat{\vh}$, and a second term encouraging the reconstructed point $\hat{\vh}$ to fall within the same region as the original point. The total loss is thus
\begin{equation}\label{eq:lae}
\varphi_1\,\|\vh - D(E(\vh))\|_2^2 + \varphi_2\,\text{CrossEntropy}(F_{un}(D(E(\vh))), y(\vh)),
\end{equation}
where $\varphi_1$ and $\varphi_2$ are hyperparameters balancing the two losses. We call this the Laguerre Autoencoder (LAE), since it incorporates the final-layer Laguerre-Voronoi Diagram into the optimization, encouraging hidden vectors within the same cell to be mapped into the same 2D region (Figure~\ref{fig:LAE}).

For a hidden vector $\vh_{i}^{(\ell)}$ in an intermediary layer, we pass it through the piecewise-linear portion of the LLM up to the final layer (see Theorem~\ref{thm:dnn_piecewise}), and use $\tilde{G}^{(\ell)}(\vh_{i}^{(\ell)})$ as a proxy for $\vh_{i}^{(\ell)}$ at the final layer, where
$$
\tilde{G}^{(\ell)} = g_L \circ f_L \circ \cdots \circ g_{\ell} \circ f_{\ell}.
$$
With the LAE, we visualize both intermediary and final representations within the same diagram, enabling visual inspection and diagnosis of the LLM's computation trajectories.

%Please add the following packages if necessary:
%\usepackage{booktabs, multirow} % for borders and merged ranges
%\usepackage{soul}% for underlines
%\usepackage{xcolor,colortbl} % for cell colors
%\usepackage{changepage,threeparttable} % for wide tables
\begin{table}[!htp]\centering\small
\caption{Error rates for linear probing experiments with different definitions of concepts by Categorical Geometry~\citep{park2025geometry}, Lattice Geometry~\citep{xiong2026lattice}, and Laguerre Geometry (ours).}\label{tab:linearity}
\resizebox{\textwidth}{!}{ % use this if the table is too large
\begin{tabular}{lrr|rr|rr|rr}\toprule
&\multicolumn{4}{c}{Phi-2} &\multicolumn{4}{c}{Gemma-2-2B} \\\cmidrule{2-9}
Datasets &\multicolumn{2}{c}{animal $\leftrightarrow$ plant} &\multicolumn{2}{c}{event $\leftrightarrow$ cognition} &\multicolumn{2}{c}{animal $\leftrightarrow$ plant} &\multicolumn{2}{c}{event $\leftrightarrow$ cognition} \\\cmidrule{1-9}
&error$\downarrow$ &(shuffle) &error$\downarrow$ &(shuffle) &error$\downarrow$ &(shuffle) &error$\downarrow$ &(shuffle) \\\midrule
Categorical Geometry &0.121 {\tiny ±0.053} &0.500 &0.213 {\tiny ±0.022} &0.499 &\textbf{0.087 {\tiny ±0.020}} &0.496 &0.218 {\tiny ±0.021} &0.500 \\
Lattice Geometry &0.145 {\tiny ±0.060} &0.491 &0.192 {\tiny ±0.021} &0.499 &0.148 {\tiny ±0.024} &0.499 &0.226 {\tiny ±0.021} &0.499 \\
Laguerre Geometry &\textbf{0.111 {\tiny ±0.052}} &0.481 &\textbf{0.180 {\tiny ±0.021}} &0.500 &0.108 {\tiny ±0.024} &0.497 &\textbf{0.206 {\tiny ±0.022}} &0.490 \\
\bottomrule
\end{tabular}}
\end{table}
% --------------- EXP 1 ---------------
\section{Experiments} \label{sec:experiments}
In this section, we empirically investigate: (1) whether, and how, different concept definitions affect the linear separability of two linearly separable categories; (2) whether Laguerre Geometry can detect subsumption and hierarchical relations in a pretrained LLM, and how it compares to previous methods; and (3) both qualitative and quantitative comparisons of different lenses, evaluating their ability to decipher hidden meanings when an LLM is prompted with in-context interference. Throughout this section, we use our proposed Laguerre Autoencoder as a diagnostic tool to jointly illustrate both representation trajectories and decision regions.

\subsection{Linear Separability under Different Concept Definitions} \label{sec:exp_linear}
\textbf{Dataset Construction and Experimental Setup.}
To construct category datasets, we follow a strategy similar to \citet{xiong2026lattice}, generating concept sets from the WordNet hierarchy~\citep{miller1995wordnet}. Two datasets correspond to physical domains (\texttt{Animal} and \texttt{Plant}), and two correspond to abstract domains (\texttt{Event} and \texttt{Cognition}). For example, the \texttt{Animal} category contains 93 unique concepts (for the Phi-2 tokenizer), including \{\texttt{\_dog}, \texttt{\_cat}, \texttt{\_bird}, ...\} (see \S\ref{sec:app_category_data}). Our experiment rests on the assumption that, \textit{for two predefined, separable sets of concepts, a better definition of concept should yield better linear separability}. We quantitatively compare three concept definitions: whitened unembedding vectors~\citep{park2025geometry}, final hidden representations~\citep{xiong2026lattice}, and Laguerre-Voronoi cells as regions (ours). To the best of our knowledge, this is the first quantitative benchmark of concept definitions in LLMs. To make the different concept hypotheses comparable, we use linear probing~\citep{belinkov2022probing} to test the linear separability of two sets. For a pair of contrastive sets, e.g., $\texttt{Animal} \leftrightarrow \texttt{Plant}$, we sample an equal number of concepts from each set and encode them under all three methods. Since a Laguerre-Voronoi diagram is the projection of a convex hull in a lifted space, we define the lifted point of a cell with center $\vc_j$ and weight $\nu_j$ as $\|\vc_j\|^2 - \nu_j$, and use this as its representation in the probing experiment. These lifted points serve as an empirical surrogate for Theorem~\ref{thm:linear2}. See \S\ref{sec:app_linear} for further implementation details.

\textbf{Region-Based Concept Definitions Yield Better Linear Separability.}
We test the linear separability of two randomly sampled concept sets drawn from two categories (e.g., \{"\_cat", "\_dog", "\_bird",...\} vs.\ \{"\_corn", "\_rose", "\_pine",...\}) across four LLMs: Phi-2~\citep{javaheripi2023phi}, Gemma-2-2B~\citep{team2024gemma}, Pythia-70M~\citep{pythia}, and Gemma-3-270M~\citep{gemma3}. Results are shown in Table~\ref{tab:linearity} and Table~\ref{tab:linearity2} as the mean and standard deviation over 500 runs. On Phi-2, Laguerre Geometry exhibits the strongest linear discrimination ability, achieving an error rate of 0.111 on \texttt{Animal} vs.\ \texttt{Plant} and 0.180 on \texttt{Event} vs.\ \texttt{Cognition} — surpassing Categorical Geometry (0.121 and 0.213) and Lattice Geometry (0.145 and 0.192). This demonstrates that introducing Laguerre weights improves the linear separability of distinct concept categories. As a null baseline, we randomly mix the concepts from the two categories and repeat the probing under the same setting; all methods degrade to an error rate of 0.5, confirming that linear separability is not trivially achieved for two random concept sets.
%%
%Please add the following packages if necessary:
%\usepackage{booktabs, multirow} % for borders and merged ranges
%\usepackage{soul}% for underlines
%\usepackage{xcolor,colortbl} % for cell colors
%\usepackage{changepage,threeparttable} % for wide tables
\begin{table}[!htp]
    \centering
    \footnotesize
\caption{Accuracy of hierarchical relations discovered by the Lattice Geometry and the Laguerre Geometry.}\label{tab:domination}
\resizebox{\textwidth}{!}{ % use this if the table is too large
\begin{tabular}{lrrrr|rrrr}\toprule
Models &\multicolumn{4}{c}{Phi-2} &\multicolumn{4}{c}{Pythia-70M} \\\cmidrule{1-9}
Datasets &animal &plant &event &cognition &animal &plant &event &cognition \\\midrule
Lattice Geometry~\citep{xiong2026lattice} &0.5583 &0.4186 &0.4920 &0.5261 &0.4623 &0.3415 &0.4824 &\textbf{0.4991} \\
(Random) &\textcolor{gray}{0.4833} &\textcolor{gray}{0.5116} &\textcolor{gray}{0.5128} &\textcolor{gray}{0.5135} &\textcolor{gray}{0.5094} &\textcolor{gray}{0.5366} &\textcolor{gray}{0.5226} &\textcolor{gray}{0.4896} \\\midrule
Laguerre Geometry &\textbf{0.7333} &\textbf{0.8837} &\textbf{0.7237} &\textbf{0.5874} &\textbf{0.9245} &\textbf{0.9024} &\textbf{0.5577} &0.4520 \\
(Random) &\textcolor{gray}{0.5057} &\textcolor{gray}{0.5164} &\textcolor{gray}{0.5164} &\textcolor{gray}{0.5106} &\textcolor{gray}{0.5402} &\textcolor{gray}{0.5377} &\textcolor{gray}{0.5443} &\textcolor{gray}{0.5384} \\
\bottomrule
\end{tabular}}
\end{table}
\begin{figure}
    \centering
\subfloat{\includegraphics[height=7.3cm]{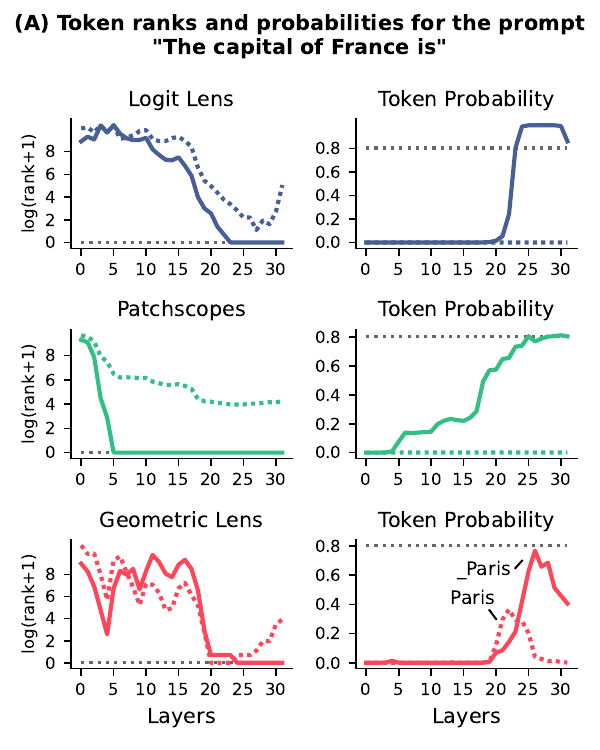}} \hspace{4mm}
\subfloat{\includegraphics[height=7.3cm]{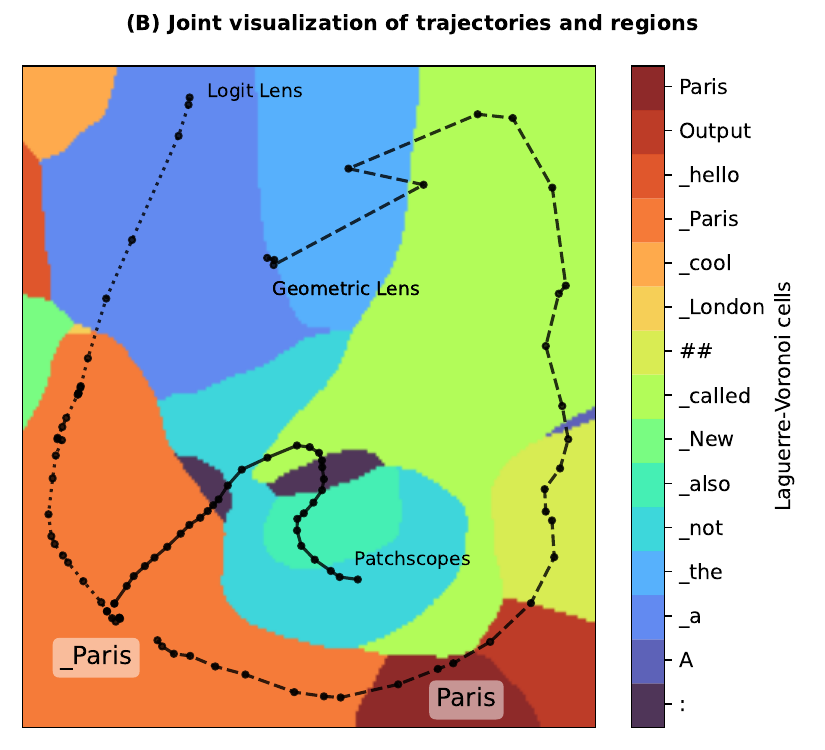}}
\caption{Visualization of trajectories for the prompt \texttt{The capital of France is}. (A) Log rank (left column) and probability (right column) of the two tokens \texttt{\_Paris} (solid lines) and \texttt{Paris} (dotted lines), elicited by three lenses: \textcolor{BlueViolet}{Logit Lens}, \textcolor{ForestGreen}{Patchscopes}, and \textcolor{red}{Geometric Lens}. Geometric Lens is the only lens that detects both the \textit{factually correct} token \texttt{Paris} and the switching phase between \texttt{\_Paris} and \texttt{Paris}. (B) Trajectories in the final representation space, produced by the three lenses and visualized via the Laguerre Autoencoder.}\label{fig:paris_AE}
% \vspace{-3mm}
\end{figure}

% --------------- EXP 2 ---------------
\subsection{Unsupervised Discovery of Hierarchical Concept Structures in LLMs} \label{sec:exp_hierarchy}
\textbf{Experimental Setup.}
The inclusion score of \citet{xiong2026lattice} depends on a predefined set of attributes (e.g., "can fly," "has fur," "lays eggs" for animals), requiring GPT-4o to generate the attribute schema and populate the object–attribute matrix. By contrast, our domination score is attribute-free, requiring only the model parameters themselves.
In our Laguerre Geometry framework, the domination score (Def.~\ref{def:domination}) is a relative score. To convert it into a discriminant score, we subtract a random baseline. Specifically, for concept $j=\texttt{animal}$ and $j'=\texttt{dog}$, their domination relation is determined by
$$
\text{Domination}(j \to j') - \frac{1}{|\gJ|} \sum\nolimits_{j'' \in \gJ \subseteq \gV} \text{Domination}(j \to j'').
$$
This difference compares the domination score from \texttt{animal} to \texttt{dog} against its average score toward a random set of background tokens; a genuine domination relation should score higher than this background baseline. See \S\ref{sec:app_domination} for further details.

\textbf{Concept Hierarchy is Encoded in Laguerre Weights.}
Before the large-scale experiment, we first examine a case study. Using \texttt{\_Dogs} as the target concept, we retrieve both its hyponyms and hypernyms, using either the full domination score or the center distance alone (i.e., $\|\vc_j - \vc_{j'}\|^2$, the distance between two embedding vectors). Results are shown in Figure~\ref{fig:laguerre}(B) and Figure~\ref{fig_supp:inclusion}(A, B). The domination score correctly identifies hyponyms, such as \{"puppy", "furry", "barking", ...\}, and hypernyms, such as \{"Pets", "mammalian", "Animals", ...\}. In contrast, using distance alone (i.e., without the Laguerre weight $\nu$), hyponyms and hypernyms cannot be distinguished. Similar results hold for other concepts (Figure~\ref{fig_supp:inclusion}(C, D)).

For large-scale evaluation, we use the same four datasets: \texttt{Animal}, \texttt{Plant}, \texttt{Event}, and \texttt{Cognition}. All concepts within a category serve as its ground-truth hyponyms. On Phi-2 and Pythia-70M, our domination score achieves accuracy ranging from 58.74\% to 88.37\% on Phi-2, and from 45.20\% to 92.45\% on Pythia-70M — consistently outperforming Lattice Geometry.

% why study hierarchy
The discovery of hierarchical relations~\citep{sakata2026linear, xiong2026lattice} sheds light on how high-level, complex structures emerge from auto-regressive LLM pretraining. We leave the extension to more sophisticated phenomena, such as logic and reasoning, to future work.
\begin{figure}
    \centering
\subfloat{\includegraphics[width=\textwidth]{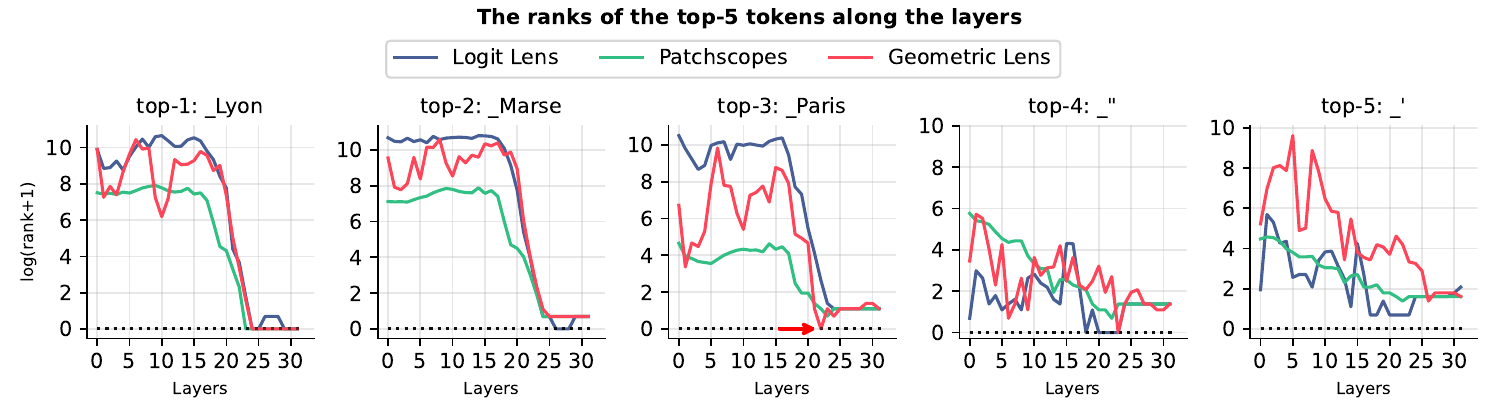}}
\caption{Monitored log ranks of the top-5 final tokens for the prompt "\texttt{You are in a fictional world where Marseille and Lyon have swapped their names. The Louvre Museum is located in the city of}". Geometric Lens detects that \texttt{\_Paris} is ranked first at a middle layer. The corresponding 2D visualization is shown in Figure~\ref{fig:AE2}.}\label{fig:AE2_top_5}
\end{figure}

% --------------- EXP 3 ---------------
\subsection{Comparing Different Lenses in Regular Forward Passes and Under In-Context Interference} \label{sec:exp_lens}
\textbf{Dataset and Experimental Setup.}
We compare our Geometric Lens against Logit Lens~\citep{lesswrong_logit_lens} and Patchscopes~\citep{ghandeharioun2024patchscopes}, excluding training-dependent methods such as Tuned Lens~\citep{belrose2023eliciting} and Future Lens~\citep{pal2023future} from our comparison. J-Lens~\citep{gurnee2026verbalizable} depends on the sparsity level, and its decomposition is non-unique. For Patchscopes, the target prompt is itself a hyperparameter, with no established optimal choice. By contrast, our Geometric Lens is hyperparameter-free and yields a unique representation decoding, since a trained model uniquely determines the space partitioning at each layer.

\textbf{Analysis of Trajectories Produced by Different Lenses.}
We use the prompt "\texttt{The capital of France is}" to illustrate the differing trajectories produced by each lens. As shown in Figure~\ref{fig:paris}, Patchscopes converges to the final token \texttt{\_Paris} as early as the 6th layer and maintains this prediction through the final, 32nd layer. Both Logit Lens and Geometric Lens arrive at \texttt{\_Paris} around the 24th/25th layer, but the tokens elicited by Geometric Lens are more diverse. The full trajectories from Geometric Lens, across all token positions, are visualized via the Laguerre Autoencoder in Figure~\ref{fig:AE_full_Paris}. The first path, generated by the token "The", collapses to a single point, confirming that all its intermediary hidden vectors correspond to the same decision cell. As more tokens are incorporated, the trajectory passes through an increasing number of Laguerre cells. To quantify the diversity of tokens elicited by each lens, we construct a dataset of 249 countries and their capitals (see \S\ref{sec:app_lenses} for details), and map each token to the first principal component of the embedding matrix. Results are shown in Figure~\ref{fig:lenses}. The figure shows that Patchscopes tends to converge to the final token at very early layers, whereas both Geometric Lens and Logit Lens arrive at the final token only in the last several layers; however, Geometric Lens is more stable during the final stage, while exhibiting greater diversity during the early stage. We demonstrate the practical usefulness of these observations in the next section.

\begin{wrapfigure}{r}{0.47\textwidth} % change this value
\vspace{-6mm}
\centering
\subfloat{\includegraphics[width=0.46\textwidth]{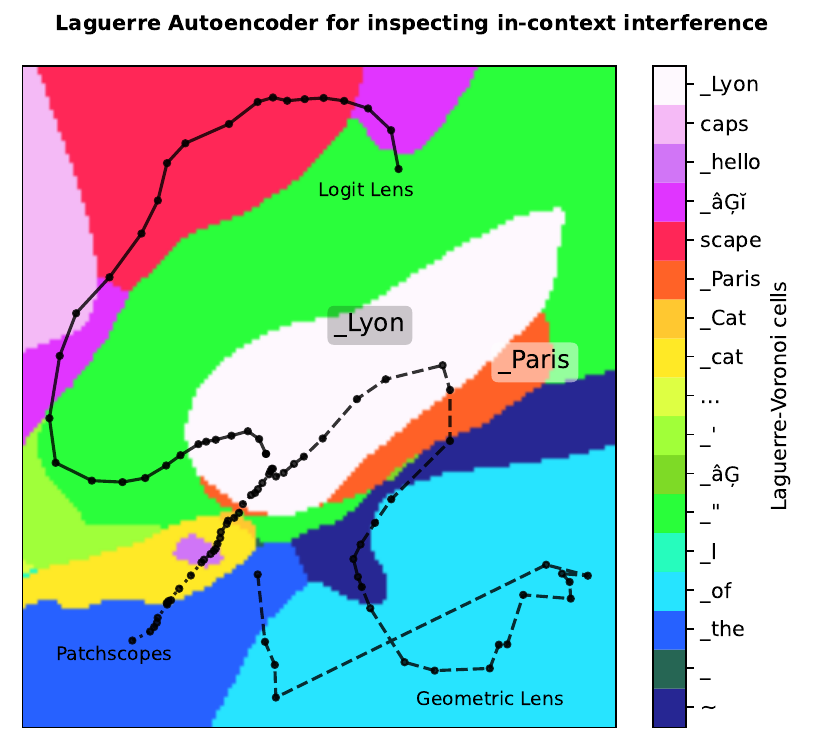}} 
% \subfloat{\includegraphics[height=1.0in]{figs/2D}} 
% \vspace{-3mm}
\caption{Laguerre Autoencoder visualization of the prompt "\texttt{You are in a fictional world where Marseille and Lyon have swapped their names. The Louvre Museum is located in the city of}". Geometric Lens passes through \texttt{\_Paris} before reaching \texttt{\_Lyon}.}
\label{fig:AE2}
\vspace{-2mm} % change this value
\end{wrapfigure}
\textbf{Intermediary Representations Are Not Artifacts, but Carry Useful Meaning.}
Existing lenses operate under the assumption that every intermediary representation should ideally encode the same concept as the final layer. Tuned Lens, for instance, minimizes the KL divergence between the probability distribution produced by the lens at each intermediary layer and the final distribution produced by the frozen model. In this paper, we challenge this assumption, arguing instead that a hidden representation intentionally carries diverse, evolving meanings — and that this diversity can itself be leveraged usefully.

For a token appearing in a lens's output trajectory, we distinguish factual correctness from grammatical correctness. For the prompt "\texttt{The capital of France is}", for example, both "\texttt{\_Paris}" and "\texttt{Paris}" are \textit{factually correct}, but only "\texttt{\_Paris}" (with a leading space) is grammatically correct. We track both tokens across all layers for all three lenses: Logit Lens, Patchscopes, and Geometric Lens. Results are shown in Figure~\ref{fig:paris_AE}(A), where an interesting trend emerges — one visible only through Geometric Lens: in the earlier layers, the ranks of "\texttt{\_Paris}" and "\texttt{Paris}" rise at a similar rate; in the middle layers, "\texttt{Paris}" takes priority, reaching the top-1 rank around layers 20–25; in the final layers, the rank of "\texttt{Paris}" drops sharply, and "\texttt{\_Paris}" emerges as the sole correct token. There is a pronounced peak in "\texttt{Paris}"'s probability between layers 20 and 25, a peak absent for both Logit Lens and Patchscopes. We further use the Laguerre Autoencoder to jointly visualize the trajectories and Laguerre cells in Figure~\ref{fig:paris_AE}(B). This 2D visualization makes clear that the three lenses follow markedly different trajectories: Patchscopes takes the shortest path of the three; Logit Lens passes through fewer cells; and Geometric Lens traces the most complex path, passing through the greatest number of regions. Notably, Geometric Lens is the only lens whose trajectory starts from generic tokens ("\_a" and "\_the"), passes through the factually correct token "\texttt{Paris}", and finally settles in the "\texttt{\_Paris}" cell — consistent with the intuition that an LLM gradually refines its answer across layers.

This suggests that the LLM first retrieves all factually correct candidates in its earlier layers, and only later resolves both factual and grammatical correctness. Geometric Lens is the only lens that reveals this mechanism.
%Please add the following packages if necessary:
%\usepackage{booktabs, multirow} % for borders and merged ranges
%\usepackage{soul}% for underlines
%\usepackage{xcolor,colortbl} % for cell colors
%\usepackage{changepage,threeparttable} % for wide tables
\begin{table}[!htp]\centering\small
\caption{Comparison of different lenses on the task of recovering the correct token under in-context interference.}\label{tab:lens}
\resizebox{\textwidth}{!}{ % use this if the table is too large
\begin{tabular}{lrrr|rrr|rrr|rrr}\toprule
Models &\multicolumn{3}{c}{Phi-2 } &\multicolumn{3}{c}{Gemma-2-2B} &\multicolumn{3}{c}{Gemma-2-9B} &\multicolumn{3}{c}{Llama-3.1-8B} \\\cmidrule{1-13}
Datasets &Paris &London &Berlin &Paris &London &Berlin &Paris &London &Berlin &Paris &London &Berlin \\\midrule
Greedy (top-1) &0.00 &0.01 &0.00 &0.03 &0.15 &0.19 &0.00 &0.03 &0.02 &0.01 &0.02 &0.00 \\
Logit Lens &0.05 &0.15 &0.00 &0.07 &0.52 &0.16 &0.32 &0.01 &0.24 &0.07 &0.19 &0.01 \\
Patchscopes &0.55 &0.25 &0.13 &0.09 &0.39 &0.27 &\textbf{0.41} &0.03 &0.29 &0.02 &0.03 &0.02 \\
Geometric Lens &\textbf{0.56} &\textbf{0.31} &\textbf{0.49} &\textbf{0.25} &\textbf{0.61} &\textbf{0.59} &\textbf{0.41} &\textbf{0.06} &\textbf{0.50} &\textbf{0.13} &\textbf{0.28} &\textbf{0.36} \\
\bottomrule
\end{tabular}}
\end{table}

\textbf{Geometric Lens Reveals a Model's Internal Reasoning under In-Context Interference.} \label{sec:in_context}
In this section, we focus on prompts of the form: "\texttt{You are in a fictional world where Marseille and Lyon have swapped their names. The Louvre Museum is located in the city of}", where a prefix mentioning two irrelevant cities A and B precedes a fact about a third city, C. We find that most models answer with the misleading city A or B more than 90\% of the time — a phenomenon we term in-context interference. The question we investigate is which lens, if any, is robust to this interference. To this end, we construct three datasets — \texttt{Paris}, \texttt{London}, and \texttt{Berlin} — each exhibiting in-context interference (see \S\ref{sec:app_lenses} for details). Motivated by the previous section's finding that Geometric Lens's tokens at layers 20–25 tend to be factually correct, we use the token at layer 20, rather than at the final layer, as the model's output, and measure accuracy accordingly. Results are shown in Table~\ref{tab:lens}. Strikingly, all three lenses substantially improve accuracy under this scheme. Geometric Lens achieves accuracies of 0.31–0.56 on Phi-2, 0.25–0.61 on Gemma-2-2B, 0.06–0.50 on Gemma-2-9B, and 0.13–0.36 on Llama-3.1-8B, outperforming both Patchscopes and Logit Lens in each case. We further investigate the token-switching phenomenon; an example from the \texttt{Paris} dataset is shown in Figures~\ref{fig:AE2_top_5} and~\ref{fig:AE2}. Among the three lenses, Geometric Lens is the only one for which the correct token \texttt{\_Paris} ever attains the top-1 rank at a middle layer. The Laguerre Autoencoder visualization further shows that Geometric Lens is the only lens whose trajectory passes through the \texttt{\_Paris} cell before finally settling in the \texttt{\_Lyon} cell. These findings suggest that, although the final output token is always the same, different lenses can elicit markedly different trajectories at middle layers — and only the correct lens reveals the model's true, and often complex, internal reasoning process.

\section{Discussion and Conclusion} \label{sec:discussion}
In this work, we unify several distinct directions in LLM research — the Linear Representation Hypothesis (LRH), concept geometry, piecewise-linear subdivision, lens-based hidden-representation elucidation, and in-context learning — within a single, holistic Laguerre Geometry framework. In doing so, we challenge two prevailing assumptions: the vector-based definition of a concept under LRH, and the absence of a principled ground truth for hidden representations.

Beyond LRH, several more elaborate hypotheses have been proposed, including the Frame Representation Hypothesis~\citep{valois2025frame}, the Lattice Representation Hypothesis~\citep{xiong2026lattice}, and the Platonic Representation Hypothesis~\citep{bangachev2026representation}. Rather than proposing yet another such hypothesis, we instead establish a geometric framework in which LRH can be strictly defined and shown to arise naturally.

A strict definition of concept is a prerequisite for grounding LRH in LLMs. By extending the piecewise-linear subdivision theory of DNNs to LLMs, we uncover their piecewise-linear structure and redefine concepts as regions within it. Regions — both the "outer" Laguerre-Voronoi cells and the "inner" linear regions — constitute the most basic geometric elements of each layer in an LLM (Theorem~\ref{thm:dnn_piecewise}), and carry substantially richer geometric information than directions or vectors alone.

Lenses have long remained heuristic, lacking a clear theory of what the "ground truth" for a hidden representation should be. In this paper, we argue that this ground-truth label should instead be derived from the piecewise-linear subdivision of the model itself. Our theory assigns a label to each response region, thereby making it possible to label every hidden vector contained within it. This labeling motivates a new lens, the Geometric Lens, which we validate empirically both in regular forward passes and under in-context interference.

This paper lays a theoretical foundation for further examining how concepts are organized, using response regions as basic building blocks. As an initial step in establishing this geometric framework, we have not yet addressed causal intervention — that is, the controlling and steering of model behavior~\citep{gurnee2026verbalizable}. In future work, we plan to apply this framework to behavior steering in LLMs.

% end of text

\bibliographystyle{unsrtnat}
\bibliography{references}  %%% Uncomment this line and comment out the ``thebibliography'' section below to use the external .bib file (using bibtex) .

\newpage
\appendix
\section{Appendix}
\subsection{Proofs} 
In this section, we present the proof of the theorems.
\subsubsection{Proof of Theorem 4.1} 
\label{sec:app_pd}

\begin{lemma}
\label{lem:power}
The vertical projection of the lower envelope of the hyperplanes
$\{\Pi_k(\vz): \mW_k^\top \vz + \nu_k\}_{k=1}^K$
onto the input space $\mathbb{R}^n$ induces the cells of an LVD.
\end{lemma}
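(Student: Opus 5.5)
The plan is to reduce the lower-envelope condition to the Laguerre-distance condition of Definition~\ref{def:vd} by adding a term that does not depend on the site, namely $\|\vz\|_2^2$. Fix $\vz \in \R^n$. The vertical projection of the lower envelope assigns $\vz$ to the index $k$ whose hyperplane realizes $\min_k \Pi_k(\vz)$. So the cell of $k$ in this projected subdivision is
$$\{\vz : \mW_k^\top \vz + \nu_k \le \mW_{k'}^\top \vz + \nu_{k'} \ \forall k'\},$$
with ties handled by the same argmin convention as in Eq.~\eqref{eq:vd_assign}. I will exhibit centers and Laguerre weights such that the Laguerre distance to site $k$ equals $\Pi_k(\vz)$ plus a term common to all $k$. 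Then the argmin, and hence every cell, is unchanged.

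\textbf{Choice of centers and weights.} Expand the Laguerre distance:
$$\|\vz - \vc_k\|_2^2 - \tilde\nu_k = \|\vz\|_2^2 - 2\vc_k^\top \vz + \|\vc_k\|_2^2 - \tilde\nu_k.$$
Matching the linear and constant parts with $\Pi_k$ dictates the choice
$$\vc_k = -\tfrac12 \mW_k, \qquad \tilde\nu_k = \|\vc_k\|_2^2 - \nu_k = \tfrac14\|\mW_k\|_2^2 - \nu_k.$$
With this choice, $\|\vz-\vc_k\|_2^2 - \tilde\nu_k = \|\vz\|_2^2 + \Pi_k(\vz)$ for every $k$. Since $\|\vz\|_2^2$ does not depend on $k$, $\argmin_k \Pi_k(\vz) = \Re(\vz)$, so the projected envelope cells are exactly the cells $\omega_k$ of the LVD with triples $(\omega_k, \vc_k, \tilde\nu_k)$. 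Note that $\tilde\nu_k$ is the LVD weight of Definition~\ref{def:vd}, which is different from the offset $\nu_k$ in the lemma. This notational clash is worth flagging explicitly.

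\textbf{Link to Theorem~\ref{thm:lvd}.} The unembedding layer takes $\argmax_j (\mW_j^\top\vz + \vb_j)$, which is $\argmin_j(-\mW_j^\top\vz - \vb_j)$. Applying the lemma to the hyperplanes with slopes $-\mW_j$ and offsets $-\vb_j$ gives $\vc_j = \tfrac12\mW_j$ and $\tilde\nu_j = \tfrac14\|\mW_j\|_2^2 + \vb_j$, which is Eqs.~\eqref{eq:pd_center}--\eqref{eq:pd_weight}. Equivalently, these are the cells under the upper envelope of the original logit hyperplanes.

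\textbf{Main obstacle.} The algebra is routine. The real care is needed in the bookkeeping of conventions: lower versus upper envelope, the sign of $\mW_k$, and the two different meanings of $\nu$. A secondary point is boundaries and ties. Points on which several $\Pi_k$ coincide lie in a finite union of hyperplanes, so they form a measure-zero set. They are assigned consistently because both descriptions use the same argmin rule, which keeps the cells disjoint and covering as Definition~\ref{def:vd} requires.
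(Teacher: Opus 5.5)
Your proof is correct. It rests on the same paraboloid-lifting identity as the paper's proof, but you carry it out differently.

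The paper proceeds geometrically. It invokes Aurenhammer's map $\Pi(S)\colon y = 2\vz^\top\vc - \|\vc\|_2^2 + r^2$ and reads the power function as the vertical gap $p(\vz,S) = \|\vz\|_2^2 - \Pi(S)(\vz)$ between the paraboloid and the hyperplane. It then cites Aurenhammer's lemma that power bisectors are projections of $\Pi(S_1)\cap\Pi(S_2)$, and matches coefficients with $\mW_k^\top\vz + \vb_k$. You instead verify $\|\vz-\vc_k\|_2^2 - \tilde\nu_k = \|\vz\|_2^2 + \Pi_k(\vz)$ directly and conclude because the minimizer is unchanged by a site-independent shift. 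Your route is self-contained, gives equality of whole cells at once rather than through pairwise bisectors, and needs no external lemma. What the paper's phrasing buys is the convex-hull/lifting picture it later uses to motivate the lifted points $\|\vc_j\|_2^2 - \nu_j$ in the probing experiments.

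Your version also fixes an orientation slip in the paper. Since $p(\vz,S) = \|\vz\|_2^2 - \Pi(S)(\vz)$, the paper's identification $\vc = \tfrac12\mW_k$ actually describes the projection of the \emph{upper} envelope of the logit hyperplanes, i.e.\ the $\argmax$ of the logits. The lower envelope named in the lemma needs $\vc_k = -\tfrac12\mW_k$, exactly as you found. Your substitution $(\mW_j,\vb_j)\mapsto(-\mW_j,-\vb_j)$ then correctly recovers the centers and weights of Theorem~\ref{thm:lvd}. Your warning that the lemma's offset $\nu_k$ is not the LVD weight is also well taken.

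One small caveat: the claim that ties lie in a finite union of hyperplanes fails if two affine functions $\Pi_k,\Pi_{k'}$ coincide identically (duplicate sites). Nothing depends on it, though, because your actual argument for ties --- both descriptions use the same minimizer rule --- covers that case too.
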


\noindent\textbf{Theorem 4.1} (Unembedding Layer is a Laguerre-Voronoi Diagram). 
    The unembedding layer of an LLM, parameterized by $\mW, \vb$, partitions the representation space $\R^d$ into a Laguerre-Voronoi Diagram with centers $\{\vc_1,...,\vc_j,...,\vc_V\}$ and weights $\{ \nu_1,...,\nu_j,...,\nu_V \}$ given by:
	\begin{equation}
		\vc_j = \tfrac{1}{2} \mW_j,
	\end{equation}
	\begin{equation}
		\nu_j = \vb_j + \tfrac{1}{4} \|\mW_j\|_2^2,
	\end{equation}
    where $V = |\gV|$, i.e., the number of cells equals the vocabulary size of the LLM.
\begin{proof}
We first establish Lemma~\ref{lem:power} by deriving the correspondence between a hyperplane $\Pi_k(\vz)$ and the center of its associated cell in $\mathbb{R}^n$. By definition, the assignment of a point $\vz\in\mathbb{R}^n$ to a cell is determined by comparing the quantities $d(\vz,\vc_k)^2-\nu_k$ across all centers. Accordingly, we define the power function
\begin{equation}
\label{eq:powerfun}
    p(\vz,S)=\|\vz-\vc\|_2^2-r^2,
\end{equation}
where $S\subseteq\mathbb{R}^n$ is a sphere with center $\vc$ and radius $r$. Equivalently, the weight associated with a center can be interpreted as the squared radius, i.e., $\nu=r^2$.

Next, let $U$ denote the paraboloid $y=\|\vz\|_2^2$, and let $\Pi(S)$ denote the transformation that maps a sphere $S$ with center $\vc$ and radius $r$ to the hyperplane
\begin{equation}
\label{eq:pifun}
    \Pi(S):\quad y=2\vz^\top\vc-\vc^\top\vc+r^2.
\end{equation}
Aurenhammer~\citep{aurenhammer1987power} showed that $\Pi$ is a bijection between arbitrary spheres in $\mathbb{R}^n$ and nonvertical hyperplanes in $\mathbb{R}^{n+1}$ that intersect $U$.

Let $\vz'$ denote the vertical projection of $\vz$ onto $U$, and let $\vz''$ denote its vertical projection onto $\Pi(S)$. The power function can then be expressed as
\begin{equation}
\label{eq:pifun2}
    p(\vz,S)=d(\vz,\vz')-d(\vz,\vz'').
\end{equation}
This characterization implies the following result (Lemma 4 in~\citep{aurenhammer1987power}): for two non-concentric spheres $S_1$ and $S_2$ in $\mathbb{R}^n$, the bisector of their power cells is given by the vertical projection of $\Pi(S_1)\cap\Pi(S_2)$ onto $\mathbb{R}^n$.

We therefore obtain a one-to-one correspondence between a sphere $S$ and its associated hyperplane $\Pi(S)$. Comparing Eq.~(\ref{eq:pifun}) with the hyperplanes induced by the unembedding layer,
$\{\Pi_k(\vz): \mW_k^\top\vz+\vb_k\}_{k=1}^K$,
yields
\begin{equation}
\label{eq:mapping}
\begin{aligned}
    \vc &= \frac{1}{2}\mW_k,\\
    \nu=r^2 &= \vb_k+\frac{1}{4}\|\mW_k\|_2^2.
\end{aligned}
\end{equation}

Although $\nu$ admits the geometric interpretation of a squared radius, i.e., $\nu=r^2$, it is not necessarily nonnegative for an arbitrary unembedding layer (or, more generally, an arbitrary multiclass linear classifier), since $\vb_k$ may take sufficiently negative values.
\end{proof}

\subsubsection{Proof of Theorem 4.3}

\noindent\textbf{Theorem 4.3} (Linear Separability of Categories). \label{sec:app_category}
    Any two categories $U_{\sM} = \bigcup_{m=1}^M \omega_m$ and $U_{\sN} = \bigcup_{n=1}^N \omega_n$ are linearly separable if and only if there exist a vector $\vv \in \R^d$, a scalar $\alpha \in \R$, a margin $\epsilon \in \R_{>0}$, and sets of non-negative multipliers $\lambda_{m}, \mu_{n} \in \R_{\ge0}^{V-1}$, for all $m \in \{1,...,M\}$ and $n \in \{1,...,N\}$, such that:\\
$A^{(m)T}\lambda_m=\vv, \; \beta^{(m)T}\lambda_m\le \alpha-\epsilon$, where
$A_j^{(m)}=2(\vc_j-\vc_m)^T, \; \beta_j^{(m)}=\|\vc_j\|^2-\nu_j-\|\vc_m\|^2+\nu_m, \; \forall j \neq m$, and\\
$A^{(n)T}\mu_n=-\vv, \; \beta^{(n)T}\mu_n\le -\alpha-\epsilon$, where
$A_j^{(n)}=2(\vc_j-\vc_n)^T, \; \beta_j^{(n)}=\|\vc_j\|^2-\nu_j-\|\vc_n\|^2+\nu_n, \; \forall j \neq n$.

\begin{proof}
A cell associated with the site--weight pair $(\vc_m,\nu_m)$ is defined as
\[
C_m=\{\vz\in\R^d:\|\vz-\vc_m\|^2-\nu_m<\|\vz-\vc_j\|^2-\nu_j,\ \forall j\neq m\},
\]
which can be equivalently written as
\[
C_m=\{\vz\in\R^d:2(\vc_j-\vc_m)^T\vz<\|\vc_j\|^2-\nu_j-\|\vc_m\|^2+\nu_m,\ \forall j\neq m\}.
\]

Define
\[
A_j^{(m)}=2(\vc_j-\vc_m)^T,\qquad
\beta_j^{(m)}=\|\vc_j\|^2-\nu_j-\|\vc_m\|^2+\nu_m,\qquad
\forall j\neq m.
\]
Then,
\[
C_m=\{\vz:A^{(m)}\vz\le\beta^{(m)}\}.
\]

Suppose that $\vv$ defines a separating hyperplane between $U_{\sM}$ and $U_{\sN}$, i.e.,
\[
\vv^T\vz\le\alpha-\epsilon,\quad\forall\vz\in U_{\sM},
\]
and
\[
\vv^T\vz\ge\alpha+\epsilon,\quad\forall\vz\in U_{\sN}.
\]

Since $U_{\sM}=\bigcup_{m=1}^M\omega_m$, the first condition is equivalent to
\[
\vv^T\vz\le\alpha-\epsilon,\quad
\forall\vz\in\omega_m,\;
\forall m\in\{1,\ldots,M\}.
\]
Applying Farkas' lemma to the implication
\[
A^{(m)}\vz\le\beta^{(m)}
\;\Longrightarrow\;
\vv^T\vz\le\alpha-\epsilon,
\]
shows that it holds if and only if there exists $\lambda_m\ge0$ such that
\[
A^{(m)T}\lambda_m=\vv,\qquad
\beta^{(m)T}\lambda_m\le\alpha-\epsilon.
\]

Similarly, for every $n\in\{1,\ldots,N\}$,
\[
A^{(n)}\vz\le\beta^{(n)}
\;\Longrightarrow\;
\vv^T\vz\ge\alpha+\epsilon,
\]
holds if and only if there exists $\mu_n\ge0$ such that
\[
A^{(n)T}\mu_n=-\vv,\qquad
\beta^{(n)T}\mu_n\le-\alpha-\epsilon.
\]
\end{proof}

\subsubsection{Proof of Theorem 4.4}

\noindent\textbf{Theorem 4.4} (Concept Domination). \label{sec:app_concept}
    For two concepts $C_j(\omega_j, \vc_j, \nu_j)$ and $C_{j'}(\omega_{j'}, \vc_{j'}, \nu_{j'})$, $C_j$ dominates $C_{j'}$ (i.e., $\omega_{j'} \subseteq \omega_j$) if $\text{Domination}(j \to j') > 0$.

\begin{proof}
For any point $\vz$, concept $C_j$ dominates $\vz$ over concept $C_{j'}$ if
\[
\|\vz-\vc_j\|^2-\nu_j
<
\|\vz-\vc_{j'}\|^2-\nu_{j'}.
\]

Setting $\vz=\vc_{j'}$ gives
\[
\|\vc_{j'}-\vc_j\|^2-\nu_j
<
-\nu_{j'},
\]
or equivalently,
\[
\|\vc_{j'}-\vc_j\|^2
<
\nu_j-\nu_{j'}.
\]

Since
\[
-\nu_{j'}
=
\min_{\vz}
\left(\|\vz-\vc_{j'}\|^2-\nu_{j'}\right)
\]
is the lowest possible Laguerre distance to $\vc_{j'}$, the above inequality implies that even at the point minimizing the Laguerre distance to concept $C_{j'}$, the Laguerre distance to $C_j$ is strictly smaller. Therefore, every point assigned to $C_{j'}$ is also assigned to $C_j$, implying $\omega_{j'}\subseteq\omega_j$.
\end{proof}

\subsubsection{Proof of Theorem 4.6} \label{sec:app_piecewise}

We first establish that shortcut connections preserve piecewise-affine structure.

\begin{lemma}[Piecewise Affinity of Shortcut Connections]
\label{thm:shortcut}
If $g(\vx)$ is piecewise affine, then $f(\vx)=g(\vx)+\vx$ is also piecewise affine.
\end{lemma}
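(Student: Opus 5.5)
The plan is to work directly from the definition of piecewise affinity. We take ``$g$ is piecewise affine'' to mean that there is a finite (or locally finite) family of closed polyhedral sets $\{P_k\}_{k=1}^K$ covering $\R^{in}$, with pairwise disjoint interiors, such that on each $P_k$ we have $g(\vx)=\mA_k\vx+\vb_k$. This is the partition produced by Theorem~\ref{thm:piecewise} for an MLP with piecewise-linear activations. We then show that the same partition $\{P_k\}$ also witnesses the piecewise affinity of $f(\vx)=g(\vx)+\vx$.

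\textbf{Step 1.} Restrict to a single piece. For $\vx\in P_k$,
$$f(\vx)=\mA_k\vx+\vb_k+\mI\vx=(\mA_k+\mI)\vx+\vb_k,$$
which is affine with linear part $\mA_k+\mI$ and offset $\vb_k$. This uses only that the identity map is affine and that a sum of affine maps is affine. Note that the input and output dimensions of $g$ must agree, as they do for a residual MLP block.

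\textbf{Step 2.} Check the global requirements. The pieces $P_k$ are unchanged, so they still cover the domain, are still polyhedral, and still meet only on their boundaries. Continuity of $f$ follows from continuity of $g$ and of the identity. Consistency on shared boundaries $P_k\cap P_{k'}$ is inherited: there $\mA_k\vx+\vb_k=\mA_{k'}\vx+\vb_{k'}$, and adding $\vx$ to both sides preserves the equality. Hence $f$ is piecewise affine on the same partition.

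\textbf{Main subtlety.} The only point that needs care concerns the linear regions in the sense of Definition~\ref{def:region}, which are maximal connected sets on which the map is linear. These may differ between $g$ and $f$: adjacent pieces with distinct $\mA_k$ remain distinct after adding $\mI$, since $\mA_k+\mI=\mA_{k'}+\mI$ iff $\mA_k=\mA_{k'}$, and similarly for the offsets. So the partition into maximal regions is in fact identical. I would state this explicitly, because Theorem~\ref{thm:dnn_piecewise} later uses the fact that $M_\ell$ and $A_\ell$ induce the same region structure as their residual branches.
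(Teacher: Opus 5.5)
Your proof is correct and follows the paper's argument: on each piece where $g(\vx)=\mA_k\vx+\vb_k$, you get $f(\vx)=(\mA_k+\mI)\vx+\vb_k$, which is affine, so the same partition witnesses piecewise affinity of $f$. Your extra claim that $f$ and $g$ have identical maximal linear regions is also right; more directly, $f$ is affine on a set exactly when $g=f-\mathrm{id}$ is. The paper neither states nor needs this in the proof of Theorem~\ref{thm:dnn_piecewise}, which relies only on piecewise affinity and closure under composition.
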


\begin{proof}
Suppose $g$ is piecewise affine. Then, for each activation region $\Omega_i$, there exist a constant matrix $\mA_i$ and vector $\vc_i$ such that
\[
g(\vx)=\mA_i\vx+\vc_i,\qquad \vx\in\Omega_i.
\]
Within the same region,
\[
f(\vx)
=
g(\vx)+\vx
=
(\mA_i+\mI)\vx+\vc_i,
\]
which is affine. Since this holds for every activation region, $f$ is piecewise affine.

More generally, let
\[
F=F_2\circ f\circ F_1,
\]
where $F_1$ and $F_2$ are piecewise affine. Because the composition of piecewise-affine mappings is itself piecewise affine, $F$ remains piecewise affine.
\end{proof}

We now prove that a single-token transformer induces a piecewise-affine partition at every layer.

\noindent\textbf{Theorem 4.6} (Single-input Transformer Piecewise-linear Partition).
    Let $F$ be an $L$-layer transformer model with feedforward networks $\{\text{MLP}^{(\ell)}\}$, each using a piecewise-linear activation function; value matrices $\{\mW^{V(\ell)}\}$ and output matrices $\{\mW^{O(\ell)}\}$, for $\ell \in \{1,...,L\}$; and a language modeling head $F_{un}$, given by:
\begin{align}
        F(\vx) &= F_{un} \circ M_L \circ A_L \circ \cdots \circ M_{\ell} \circ A_{\ell} \circ \cdots \circ M_1 \circ A_1(\vx), \\
        M_{\ell}(\vx) &= \vx + \text{MLP}^{(\ell)}(\vx), \\
        A_{\ell}(\vx) &= \vx + \vx \mW^{V(\ell)} \mW^{O(\ell)}.
\end{align}
    Then $F$ partitions the input space, as well as each intermediary representation space, into piecewise-linear regions.
\begin{proof}
Consider a single input token. In this case, each attention head contains only one key-value pair, so the attention weight is identically equal to one. Consequently, the output of the multi-head attention module is simply the concatenation of the value vectors followed by the output projection:
\[
\Delta\mathrm{attn}^{(\ell)}
=
\mathrm{Concat}
\!\left(
\vv^{(\ell,1)},
\dots,
\vv^{(\ell,H)}
\right)
\mW^{O(\ell)},
\]
where
\[
\vv^{(\ell)}
=
\vx^{(\ell)}
\mW^{V(\ell)}.
\]
Therefore,
\[
\Delta\mathrm{attn}^{(\ell)}
=
\vx^{(\ell)}
\mW^{V(\ell)}
\mW^{O(\ell)},
\]
and the attention block
\[
A_\ell(\vx)
=
\vx+\vx\mW^{V(\ell)}\mW^{O(\ell)}
\]
is an affine transformation.

Each feedforward block employs a piecewise-linear activation function and is therefore piecewise affine. By Lemma~\ref{thm:shortcut}, adding the residual connection preserves piecewise affinity, implying that $M_\ell$ is piecewise affine.

Finally, since every layer is the composition of an affine attention block and a piecewise-affine feedforward block, and compositions of piecewise-affine mappings remain piecewise affine, every intermediate representation space is partitioned into piecewise-affine regions. The language modeling head is affine, so the complete transformer $F$ also induces a piecewise-affine partition of the input space.
\end{proof}

\noindent\textit{Remark on RMSNorm and RoPE.}
Although other normalization methods (e.g., LayerNorm~\citep{ba2016layer} and BatchNorm~\citep{ioffe2015batch}) and positional encodings (e.g., absolute positional encoding) exist, we focus on RMSNorm and RoPE.

Ignoring the numerical stabilization constant $\epsilon$, parameter-free RMSNorm is given by
\[
\mathrm{RMSNorm}(\vx)
=
\frac{\vx}{\sqrt{\frac{1}{d}\|\vx\|_2^2}},
\]
which rescales each hidden representation along its radial direction. Although RMSNorm is not affine, it preserves the assignment of representations to the piecewise-affine regions established by the preceding computations, affecting only their geometric embedding.

RoPE modifies only the query and key projections, $\mW^Q$ and $\mW^K$, while leaving $\mW^V$ and $\mW^O$ unchanged. Since the single-token attention block depends only on $\mW^V$ and $\mW^O$, RoPE does not alter the induced partition.

% change: \mW_{\ell}^V to \mW^{V{(\ell)}}
% change: \mW_{\ell}^O to \mW^{O{(\ell)}}
\begin{figure}
    \centering
    \subfloat{\includegraphics[height=5.3cm]{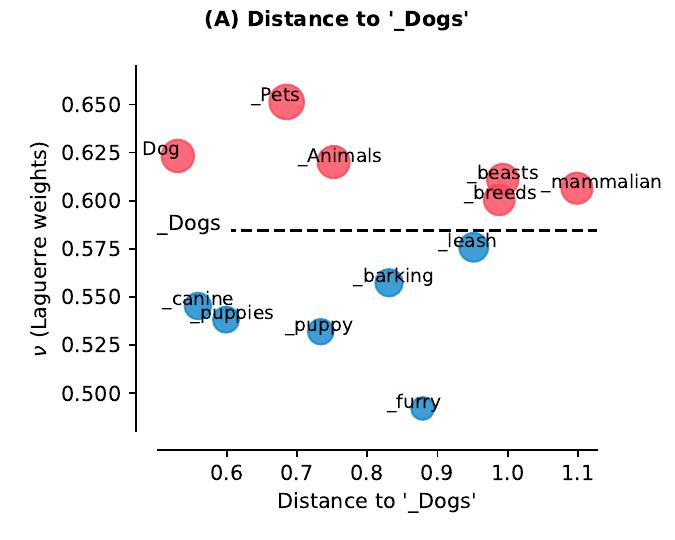}} 
    \subfloat{\includegraphics[height=5.3cm]{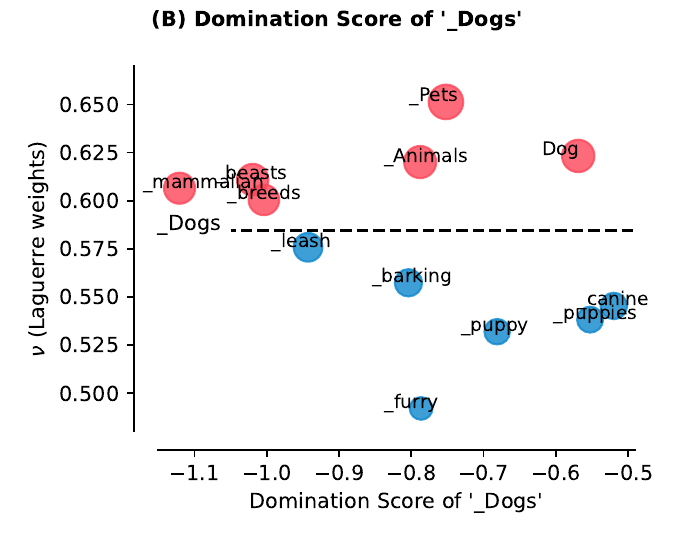}} \\
    \subfloat{\includegraphics[height=5.3cm]{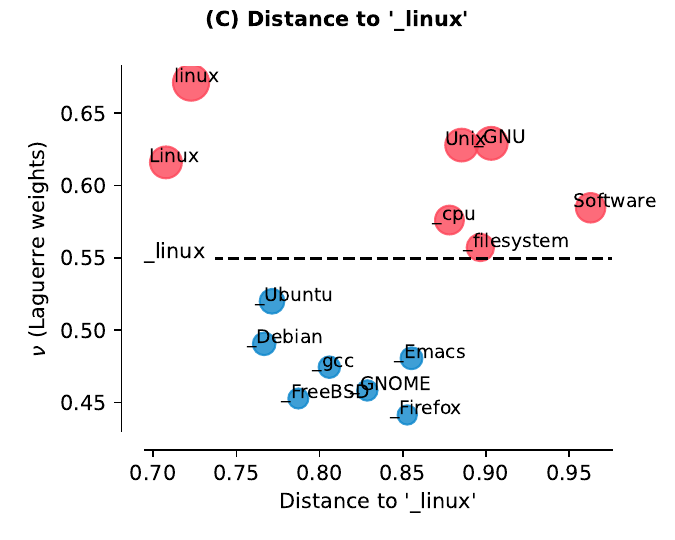}} 
    \subfloat{\includegraphics[height=5.3cm]{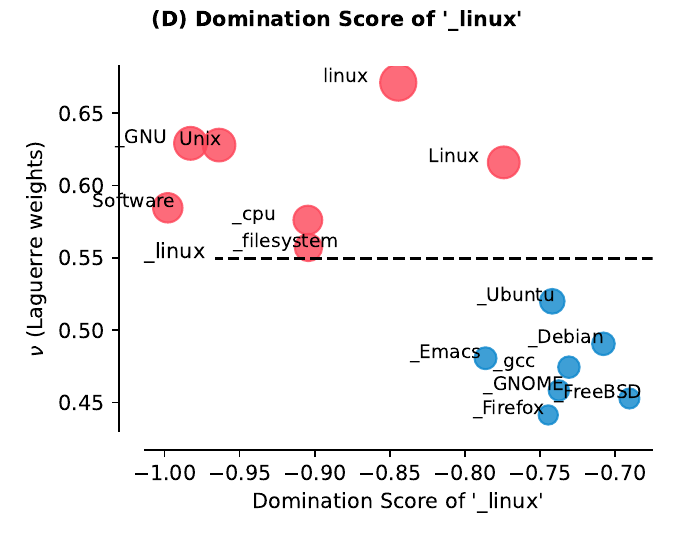}} 
    \caption{(A) Use distance to the embedding vector of "\_Dogs" to retrieve its hyponyms/hypernyms. (B) Use the domination score of "\_Dogs" to retrieve its hyponyms/hypernyms. (C) Use distance to the embedding vector of "\_linux" to retrieve its hyponyms/hypernyms. (D) Use the domination score of "\_linux" to retrieve its hyponyms/hypernyms. The embedding vectors, together with the Laguerre weights, jointly distinguish hyponyms/hypernyms.}\label{fig_supp:inclusion}
\end{figure}
\begin{figure}
    \centering
    \subfloat{\includegraphics[width=0.8\paperwidth]{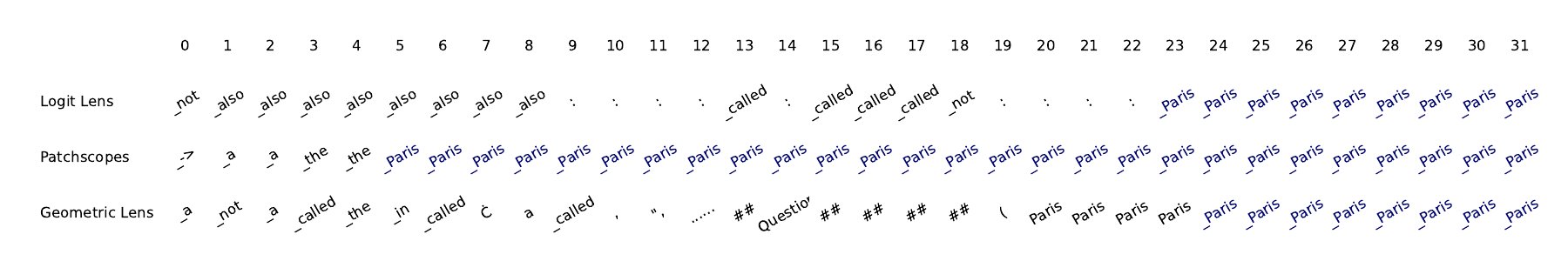}} 
    \caption{The token revealed by different lenses at each layer for the prompt: "The capital of France is". Three methods generate different token trajectories along the layers.}\label{fig:paris}
    % \vspace{-5mm}
\end{figure}
\begin{figure}
    \centering
    \subfloat{\includegraphics[width=0.5\paperwidth]{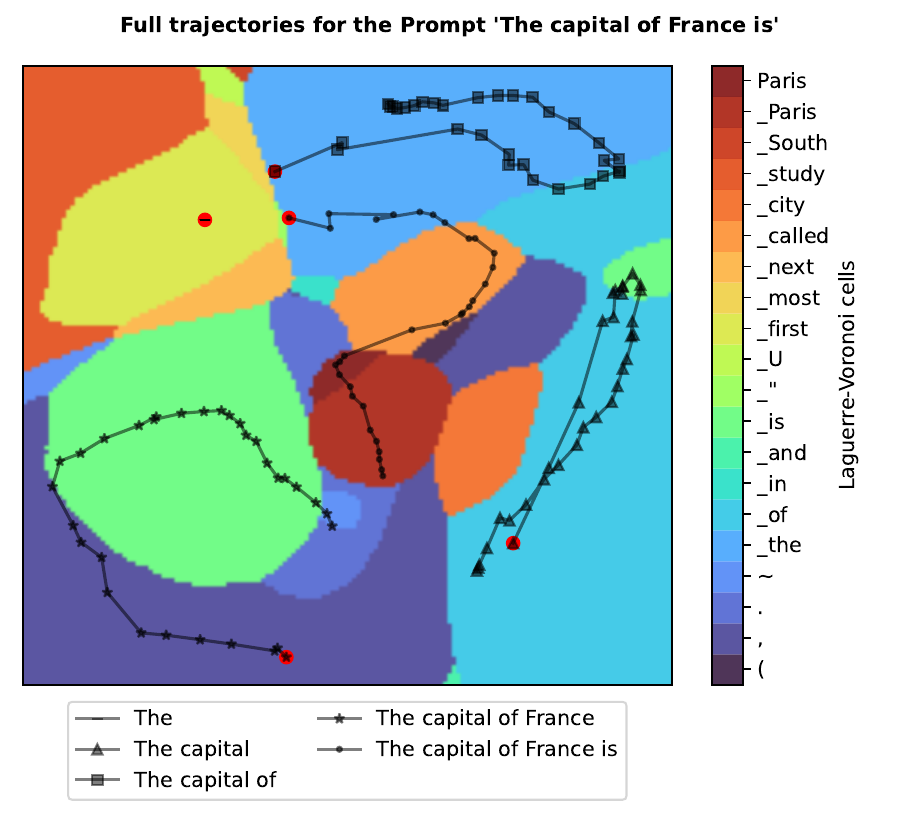}} 
    \caption{The full trajectories from the Geometric Lens for all token positions for the prompt: "\texttt{The capital of France is}". The first trajectory collapses into a single point.}\label{fig:AE_full_Paris}
    % \vspace{-5mm}
\end{figure}
\begin{figure}
    \centering
    \subfloat{\includegraphics[width=\textwidth]{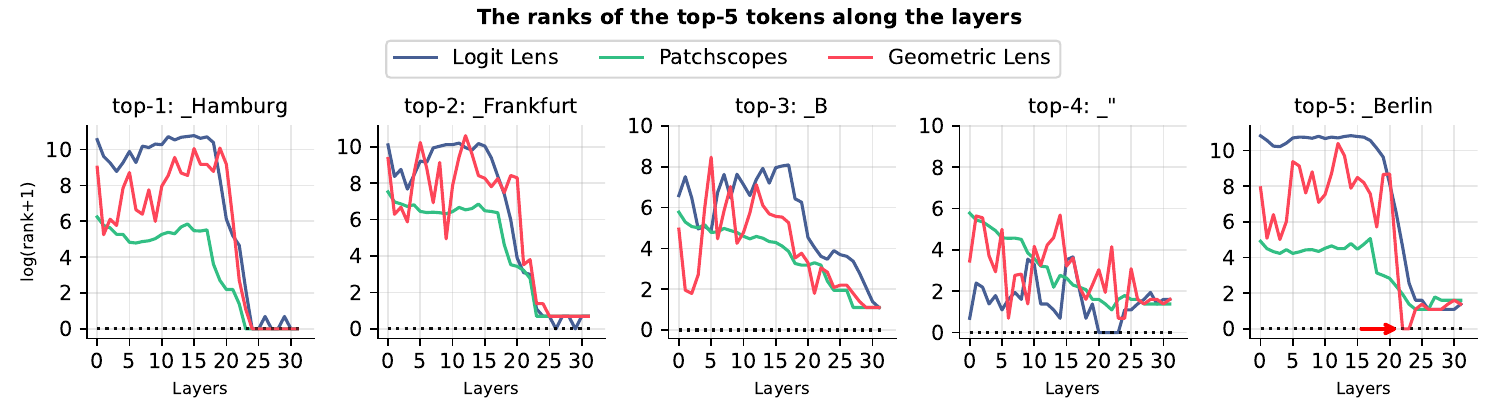}}
    \caption{The monitored log ranks of the top-5 final tokens for the prompt "\texttt{You are in a fictional world where Hamburg and Frankfurt have swapped their names. The Bode Museum is located in the city of}".}\label{fig:AE_berlin_top_5}
\end{figure}
\begin{figure}
    \centering
    \subfloat{\includegraphics[width=\textwidth]{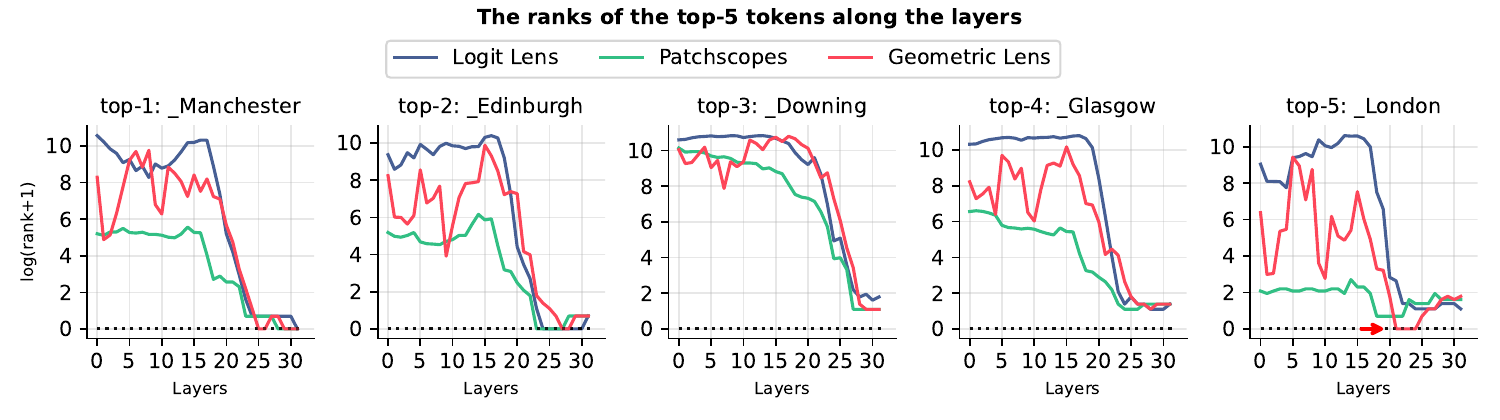}}
    \caption{The monitored log ranks of the top-5 final tokens for the prompt "\texttt{You are in a fictional world where Edinburgh and Manchester have swapped their names. Downing Street is located in the city of}".}\label{fig:AE_london_top_5}
\end{figure}
\begin{figure}
    \centering
    \subfloat{\includegraphics[height=8.3cm]{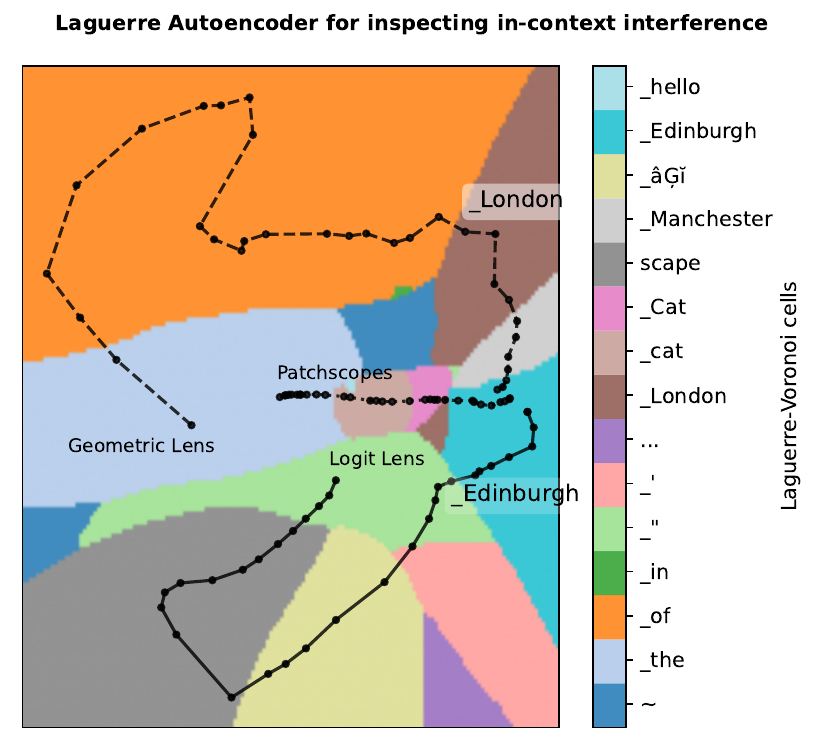}} 
    \caption{The Laguerre Autoencoder visualization of the prompt: "\texttt{You are in a fictional world where Edinburgh and Manchester have swapped their names. Downing Street is located in the city of}".}\label{fig:AE_london}
    \vspace{-5mm}
\end{figure}
\begin{figure}
    \centering
    \subfloat{\includegraphics[height=8.3cm]{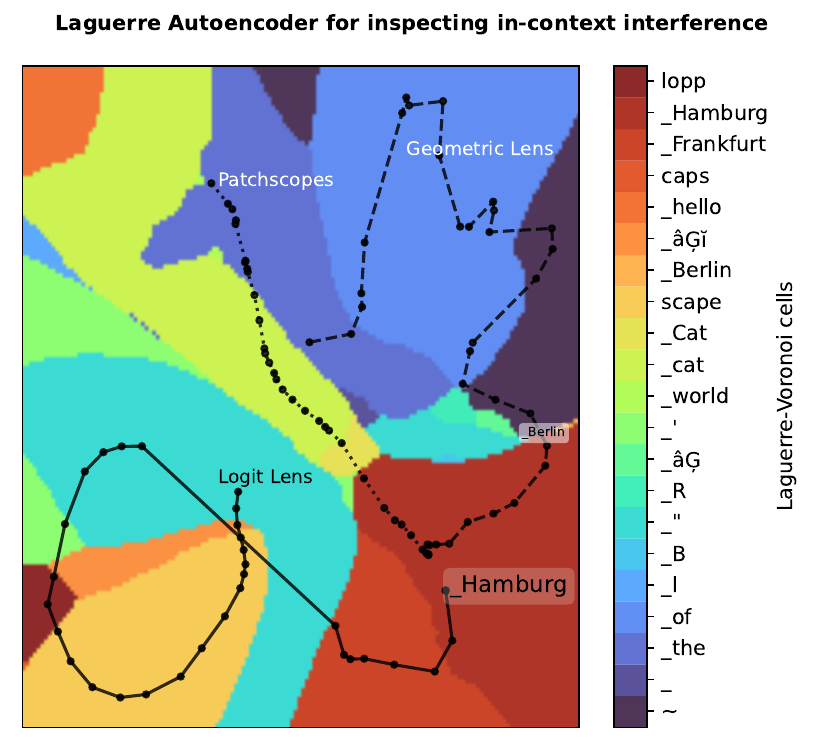}} 
    \caption{The Laguerre Autoencoder visualization of the prompt: "\texttt{You are in a fictional world where Hamburg and Frankfurt have swapped their names. The Bode Museum is located in the city of}".}\label{fig:AE_berlin}
    \vspace{-5mm}
\end{figure}

\subsection{Experiment Details} \label{sec:app_exp}

\textbf{Generation of Piecewise-Linear Subdivisions in the 2D Space.} \label{sec:app_2d}
We generated three synthetic distributions in a two-dimensional input space: one Gaussian distribution and two circular distributions. A one-hidden-layer ReLU network with 12 hidden neurons was trained to classify the three classes. The network parameters were initialized from a Gaussian distribution with mean 0 and standard deviation 0.5. Training was performed using SGD with a learning rate of 0.1 and a batch size of 32.

Each hidden neuron partitions the input space into two half-spaces (gray lines in the figure), while the final decision boundary between two classes is projected onto the input space as a piecewise-linear curve composed of linear segments (red segments). Together, the hidden-layer partition and the classification polytope partition the input space into piecewise-linear regions.

Unlike previous work, which quantifies the number of linear regions as a measure of network expressivity, we instead study the semantic label associated with each region. Specifically, we assign each linear region $R$ a label $C(R)$ corresponding to the label of the final classification Laguerre--Voronoi cell projected onto that region (e.g., the green region in the figure). This labeling induces a hierarchical tree structure consisting of $V$ trees, each rooted at a top-level Laguerre--Voronoi cell, where $V$ denotes the number of classes. The resulting structure enables us to trace both intra-tree transitions and inter-region (or equivalently, inter-tree) transitions. A similar tree representation was previously illustrated in Figure 2(c) of \citet{montufar2014number}.

\textbf{Generation of Category Data from the WordNet Hierarchy.} \label{sec:app_category_data}
We did not directly use the datasets constructed by \citet{xiong2026lattice}, as they are not publicly available. Instead, we followed their data construction procedure with several refinements.
Given a root synset (e.g., \texttt{animal}), we recursively extract its subtree by traversing WordNet hyponym relations. This process produces a candidate set of words whose hypernyms descend from the selected root concept. We then perform a second filtering step to ensure that the primary sense of each word belongs to the target subtree. For example, the word \texttt{schoolmaster} appears as a hyponym of \texttt{animal}; however, its three senses are ``presiding officer of a school,'' ``educator,'' and ``food fish,'' only the last of which belongs to the animal hierarchy. Such words are excluded to obtain clean, semantically coherent, and non-overlapping concept sets.

Using this procedure, we construct four concept datasets corresponding to \texttt{animal}, \texttt{plant}, \texttt{cognition}, and \texttt{event}. For each language model, we further filter these datasets by retaining only words that correspond to a single vocabulary token under the model's tokenizer. For example, for Phi-2, the resulting datasets contain 93 animal concepts, 48 plant concepts, 498 cognition concepts, and 1,302 event concepts. Because different language models employ different tokenizers, these counts may vary across models.
Following prior work~\citep{park2025geometry}, we consider only vocabulary tokens with a leading whitespace, denoted by ``\_'' throughout the paper.

%Please add the following packages if necessary:
%\usepackage{booktabs, multirow} % for borders and merged ranges
%\usepackage{soul}% for underlines
%\usepackage{xcolor,colortbl} % for cell colors
%\usepackage{changepage,threeparttable} % for wide tables
\begin{table}[!htp]\centering\small
\caption{Error rates for linear probing experiments with different definitions of concepts by Categorical Geometry~\citep{park2025geometry}, Lattice Geometry~\citep{xiong2026lattice}, and Laguerre Geometry (ours).}\label{tab:linearity2}
\resizebox{\textwidth}{!}{ % use this if the table is too large
\begin{tabular}{lrrrrrrrrr}\toprule
&\multicolumn{4}{c}{Pythia-70M} &\multicolumn{4}{c}{Gemma-3-270M} \\\cmidrule{2-9}
dataset &\multicolumn{2}{c}{animal $\leftrightarrow$ plant} &\multicolumn{2}{c}{event $\leftrightarrow$ cognition} &\multicolumn{2}{c}{animal $\leftrightarrow$ plant} &\multicolumn{2}{c}{event $\leftrightarrow$ cognition} \\\cmidrule{1-9}
&error$\downarrow$ &(shuffle) &error$\downarrow$ &(shuffle) &error$\downarrow$ &(shuffle) &error$\downarrow$ &(shuffle) \\\midrule
Categorical Geometry &\textbf{0.167 ± 0.058} &0.498 &0.241 ± 0.022 &0.500 &\textbf{0.101 ± 0.031 } &0.495 &0.237 ± 0.023 &0.499 \\
Lattice Geometry &0.196 ± 0.067 &0.501 &0.251 ± 0.024 &0.502 &0.120 ± 0.035 &0.499 &0.242 ± 0.025 &0.500 \\
Laguerre Geometry &0.179 ± 0.062 &0.502 &\textbf{0.193 ± 0.020} &0.494 &0.128 ± 0.038 &0.499 &\textbf{0.198 ± 0.022} &0.500 \\
\bottomrule
\end{tabular}}
\end{table}
\textbf{Probing Linear Separability under Different Concept Definitions.} \label{sec:app_linear}
For each pair of categories, we randomly sample an equal number of concepts from each category and repeat the experiment 500 times. In every run, the sampled concepts are randomly split into training and test sets using a 60:40 ratio. Concept representations for Categorical Geometry and Lattice Geometry are obtained using their official implementations. For Laguerre Geometry, region representations are computed according to Theorem~\ref{thm:lvd}. A linear classifier is then trained to distinguish 40 concepts sampled from \texttt{animal} and 40 concepts sampled from \texttt{plant}.

Although the concept datasets are constructed to maximize semantic purity, the resulting evaluation remains an approximation of true concept separability because vocabulary tokens may still be polysemous. For example, the token \texttt{\_dog} may represent both the animal \emph{dog} and occurrences within other words such as ``dogma.'' Consequently, token-level representations may not perfectly isolate individual concepts, which is a limitation of works using a token to represent a concept.

%Please add the following packages if necessary:
%\usepackage{booktabs, multirow} % for borders and merged ranges
%\usepackage{soul}% for underlines
%\usepackage{xcolor,colortbl} % for cell colors
%\usepackage{changepage,threeparttable} % for wide tables
\begin{table}[!htp]\centering\small
\caption{Accuracy of hierarchical relations discovered by the Lattice Geometry and the Laguerre Geometry.}\label{tab:domination2}
\resizebox{\textwidth}{!}{ % use this if the table is too large
\begin{tabular}{lrrrrrrrrr}\toprule
Models &\multicolumn{4}{c}{Gemma-2-2B} &\multicolumn{4}{c}{Gemma-3-270M} \\\cmidrule{1-9}
Datasets &Animal &Plant &Event &Cognition &Animal &Plant &Event &Cognition \\\midrule
Lattice Geometry~\citep{xiong2026lattice} &0.4333 &0.4706 &0.5034 &0.5173 &0.5677 &0.5244 &0.5042 &0.4767 \\
(Random) &\footnotesize 0.4833 &\footnotesize 0.5059 &\footnotesize 0.5241 &\footnotesize 0.5159 &\footnotesize 0.5197 &\footnotesize 0.4512 &\footnotesize 0.5181 &\footnotesize 0.5073 \\\midrule
Laguerre Geometry &\textbf{0.5667} &\textbf{0.6118} &\textbf{0.7479} &\textbf{0.6503} &\textbf{0.9607} &\textbf{0.9512} &\textbf{0.9822} &\textbf{0.9535} \\
(Random) &\footnotesize 0.5792 &\footnotesize 0.5774 &\footnotesize 0.5704 &\footnotesize 0.5784 &\footnotesize 0.4802 &\footnotesize 0.4594 &\footnotesize 0.4699 &\footnotesize 0.4589 \\
\bottomrule
\end{tabular}}
\end{table}
\textbf{Experiment Details for Unsupervised Hierarchical Concept Inference in LLMs.} \label{sec:app_domination}
The attribute schema and object--attribute matrix required to compute the soft inclusion score of \citet{xiong2026lattice} are not publicly available. To facilitate a fair comparison with our attribute-free domination score, we approximate the unknown attribute directions using the principal components of the concept vectors when computing the inclusion scores for Lattice Geometry.

Table~\ref{tab:domination2} compares Laguerre Geometry and Lattice Geometry on Gemma-2 and Gemma-3. Our domination score achieves accuracies ranging from 0.5667 to 0.7479 on Gemma-2-2B and from 0.9512 to 0.9822 on Gemma-3-270M. In contrast, Lattice Geometry performs close to random guessing. One possible explanation is that the inclusion score relies on a predefined attribute set, which may encode concept-specific prior information unavailable in our approximation based solely on principal components. 

\textbf{Experiment Details for the Logit Lens, Patchscopes, and Geometric Lens.} \label{sec:app_lenses}
The \texttt{Country} dataset consists of 249 country names paired with their capitals using prompts of the form ``\texttt{The capital of [country] is}.''
For in-context interference, we construct three datasets, \texttt{Paris}, \texttt{London}, and \texttt{Berlin}. For each target city $C$, we first use GPT-5.5 to generate 100 factual prompts related to that city, such as ``\texttt{The Louvre Museum is located in the city of}.'' We then prepend two irrelevant city names, denoted $A$ and $B$, before the fact associated with city $C$. Representative prompts from each dataset are provided below.

\begin{tcolorbox}[colback=gray!10, colframe=gray, title=In-Context Interference (Paris Dataset)]
\texttt{You are in a fictional world where Marseille and Lyon have swapped their names. The Louvre Museum is located in the city of\\
Incorrect tokens: Marseille/Lyon.\\
Correct token: Paris.}
\end{tcolorbox}
\begin{tcolorbox}[colback=gray!10, colframe=gray, title=In-Context Interference (London Dataset)]
\texttt{You are in a fictional world where Edinburgh and Manchester have swapped their names. Downing Street is located in the city of\\
Incorrect tokens: Edinburgh/Manchester.\\
Correct token: London.}
\end{tcolorbox}
\begin{tcolorbox}[colback=gray!10, colframe=gray, title=In-Context Interference (Berlin Dataset)]
\texttt{You are in a fictional world where Hamburg and Frankfurt have swapped their names. The Bode Museum is located in the city of\\
Incorrect tokens: Hamburg/Frankfurt.\\
Correct token: Berlin.}
\end{tcolorbox}
Empirically, prompts following this construction cause the language model to predict either city $A$ or city $B$ instead of the correct city $C$ in over 90\% of cases. We refer to this phenomenon as \emph{in-context interference}. These datasets are used to compare the information revealed by different interpretability lenses under this setting.

We implement Logit Lens and Geometric Lens using PyTorch and the Transformers library. For Patchscopes, we use the official implementation and adopt its default target prompt:
``\texttt{cat->cat; 135->135; hello->hello; ?}.''
Alternative prompts, such as
``\texttt{The multi-tokens present here are }''
and
``\texttt{Hello! Could you please tell me more about },''
are also possible; however, the choice of target prompt is not standardized, and its influence has not been systematically studied. To ensure a fair comparison, we therefore use the default prompt provided by the official implementation.

For the training of the Laguerre Autoencoder, we used a symmetric architecture with encoder and decoder both having 3 layers (512-128-32) using ReLU activation and BatchNorm. We set $\varphi_1 = 1.0$ and $\varphi_2 = 20.0$ to train all LAEs for 50 epochs with a batch size of 32 and a learning rate of 0.001.

Experiments on the \texttt{London} and \texttt{Berlin} datasets exhibit behavior consistent with the results reported in \S\ref{sec:in_context}. As shown in Figures~\ref{fig:AE_london}, \ref{fig:AE_berlin}, \ref{fig:AE_berlin_top_5}, and \ref{fig:AE_london_top_5}, for these cases, Geometric Lens is the only one of the three methods whose trajectory passes through the cell corresponding to the correct token before reaching the final prediction, demonstrating that the observed behavior generalizes across multiple datasets.

\end{document}